\newtheorem{thm}{Theorem}
\newtheorem{cor}{Corollary}
\newtheorem{defi}{Definition}
\newtheorem{lem}{Lemma}
\newtheorem{prop}{Proposition}
\newtheorem{rmk}{Remark}
\newtheorem{assump}{Assumption}
\newtheorem{exam}{Example}
\newenvironment{keywords}
{\bgroup\leftskip 20pt\rightskip 20pt \small\noindent{\bf Keywords:} }%
{\par\egroup\vskip 0.25ex}
 \newcommand*\patchAmsMathEnvironmentForLineno[1]{%
   \expandafter\let\csname old#1\expandafter\endcsname\csname #1\endcsname
   \expandafter\let\csname oldend#1\expandafter\endcsname\csname end#1\endcsname
   \renewenvironment{#1}%
      {\linenomath\csname old#1\endcsname}%
      {\csname oldend#1\endcsname\endlinenomath}}%
 \newcommand*\patchBothAmsMathEnvironmentsForLineno[1]{%
   \patchAmsMathEnvironmentForLineno{#1}%
   \patchAmsMathEnvironmentForLineno{#1*}}%
    \def\blfootnote{\xdef\@thefnmark{}\@footnotetext}
\makeatother\author[a]{Yuqing Li}
\affil[a]{School of Mathematical Sciences, Shanghai Jiao Tong University}
\author[b]{Tao Luo}
\affil[b]{School of Mathematical Sciences, Institute of Natural Sciences, MOE-LSC and Qing Yuan Research Institute, Shanghai Jiao Tong University }
\author[c]{Chao Ma}
\affil[c]{Department of Mathematics, Stanford University}
\date{\today}
 \title{Nonlinear Weighted Directed Acyclic Graph  and A Priori Estimates for Neural Networks}
\begin{document}
 
 \maketitle
 \allowdisplaybreaks

 \begin{abstract}
In an attempt to better understand structural  benefits and generalization power  of deep neural networks, we firstly present a novel graph theoretical formulation of neural network models, including fully connected, residual network~(ResNet) and densely connected networks~(DenseNet). Secondly, we extend the error analysis of  the population risk for two-layer network~\cite{ew2019prioriTwo} and ResNet~\cite{e2019prioriRes} to DenseNet, and show further that for neural networks satisfying certain mild conditions, similar estimates can be obtained. These estimates are a priori in nature since they  depend sorely on the information prior to the training process, in particular,  the bounds for the estimation errors do not suffer from the curse of dimensionality.

  \end{abstract}

\begin{keywords}
nonlinear weighted directed acyclic graph, DenseNet,  adjacency matrix, A priori estimates
\end{keywords}

\section{Introduction}
 
A central challenge in theoretical machine learning is to figure out  {the} source of the  generalization capabilities  of deep neural networks. Traditional statistical learning theory often  {fails} to provide satisfactory explanations~\cite{Bengio2010Understanding}. For this reason, there has been a flurry of recent papers  endeavor to  analyze the generalization error for neural networks~\cite{e2019prioriRes,ew2019prioriTwo,LiLiang2018SGDLearning,cao2020generalization,Daniely2017SGD,Zhu2019Learning,wojtowytsch2020priori,Neyshabur2017ExploreGeneralization,Barlett,Golowich2018Size,neyshabur2018PAC,neyshabur2018the,matengyu2019fisherrao}.  Since the problems that neural networks usually encounter tend to have very high dimensions, one issue of particular interest is   the \text{\em curse of dimensionality}~(CoD)~\cite{bellman2015adaptive}:  The computational cost depends exponentially on the dimension. 
However, in high dimensional settings, deep neural networks   have shown great promise in many   applications
and  do  not suffer from the CoD. Hence, we expect to obtain some proper  error estimates whose  error  bounds do not deteriorate as the input dimension grows.
In other words, an optimal error bound shall scale at a  rate  independent of the input dimension~\cite{ma2020towards}.
 
Another interesting  phenomenon is  that neural networks can be substantially deeper, more accurate, and efficient to train  if they contain shortcut connections from early layers to later layers. Most of the state-of-the-art neural networks  benefit from such bypassing paths~\cite{huang2017densely,He2016Deep,Highway,larsson2017fractalnet,Huang2016DeepwithStochastic}. For instance, the identity skip-connection blocks utilized in  ResNet~\cite{He2016Deep} serve as the  bypassing paths, and the counter-intuitive stochastic depth method introduced in~\cite{Huang2016DeepwithStochastic} shortens the effective depth in ResNet by randomly dropping layers during training. Theoretically, Hardt and Ma~\cite{Hardt2016identity} proved that for any residual linear networks with arbitrary depth, they possess no  spurious local optima. E et al.~\cite{e2019prioriRes} showed that optimal rate of the population risk can be guaranteed for ResNet,  {and for shallow neural networks, similar results still hold.  Venturi and Bruna \cite{Luca2019SpuriousTwoLayer} showed that spurious local minima can be avoided with high probability on overparametrized two-layer network models, E et al.~\cite{e2019prioriRes} also showed that optimal rate of the population risk can be guaranteed for two-layer network. However,
no result is available yet for deep networks without shortcut connections.}
Apart from them, the advantages of using shortcut connections remain to be discovered.

In this paper, we contribute to further understanding  of the above two aspects.
Firstly, we introduce a new representation for neural networks, namely the nonlinear weighted Directed Acyclic Graph~(DAG). The employment of DAG sheds light on the reasons behind success and failure of various network architectures from the perspective of linear algebra.  {Daniely~\cite{Daniely2017SGD}   also used DAGs to characterize the  neural network architectures but in a  different way from ours.
}
Using our representation,  revealing the  network structure  becomes a linear algebra problem. In particular, some typical feedforward neural networks such  as two-layer network, fully connected network, ResNet~\cite{e2019prioriRes,He2016Deep} and DenseNet~\cite{huang2017densely} can be represented by adjacency matrices.

In addition to representing commonly-used networks, we achieve the error bounds for a wide class of neural networks with DenseNet included.
For neural networks satisfying the assumption of shortcut connections~(\Cref{assump..BlockMatrix}),
 \text{\em a priori} estimates of the population risks can be established. According to~\cite{e2019prioriRes,ew2019prioriTwo}, most recent attempts~\cite{Neyshabur2017ExploreGeneralization,Barlett,Golowich2018Size,neyshabur2018PAC,neyshabur2018the,matengyu2019fisherrao} on bounding the generalization error of neural networks should be viewed as \text{\em a posteriori} estimates, in that the bounds rely on information acquired in the training process.
In comparison with the abovementioned  a posteriori  estimates, the a priori bound depends sorely on properties of the target function,
~hence it can be 
served as a  more natural reflection on   potential performances of different  neural networks.
The core of  our analysis is a specially designed parameter norm termed  the {\em weighted path norm}~\cite{e2019prioriRes,Neyshabur2017ExploreGeneralization} that proceeds by balancing between the complexity and the approximation. On one hand, the  weighted path norm gives control to  complexity of the hypothesis space induced by
neural networks~(\Cref{thm....RadmacherHQ}). On the other hand, the 
target functions   can be well approximated by neural networks, whose  weighted path  norm is  dominated by
the norm of target function, hence bringing about the a priori estimates~(\Cref{thm..ApproxError}). To sum up,  the hypothesis space determined by the norm is small enough to
have low complexity, but  also  large enough to have low approximation error. 

 {The organization of the paper is listed  as follows. In \Cref{Section...Prelim},  we give  some preliminary introduction to our problems. In \Cref{sec..NonlinearAdjacencyMatrix}, we propose our novel representation for feedforward neural networks. In \Cref{Section...Main Results}, we state our main results. In \Cref{Section...Proofs}, we give the full proof of the theorems and apply our estimates directly to DenseNet. Conclusions are
drawn in \Cref{Section...Conclusion}, and we compare our results with some related works.}

\section{Preliminaries}\label{Section...Prelim}
Throughout this paper, we use the following notations. We set $d$ as the input dimension,    and $n$ as the number of input samples. We  set $\Omega=[0,1]^d$ as the unit hypercube, and we let $\sigma(\cdot)$ be the Rectified Linear Unit ({ReLU}) activation, i.e., $\sigma(x)=\ReLU(x):=\max\{x,0\}$. Moreover, we use $\norm{\cdot}_1$ and $\norm{\cdot}_{\infty}$ to denote the $l_1$ and $l_{\infty}$ norms for vectors, and finally we use  $\norm{\cdot}_{1,1}$ to denote the entrywise $L_{1,1}$ norm for matrices. Specifically, for a matrix $\mA=[a_{i,j}]_{1\leq i \leq p, 1\leq j \leq q}$ of size $p\times q$, its entrywise $L_{1,1}$ norm reads
\begin{equation*}
   \norm{\mA}_{1,1}:=  \sum_{i=1}^p \sum_{j=1}^q |a_{ij}|.  
\end{equation*}
\subsection{Feedforward Neural Networks}\label{subsection....FeedForward}
In this section,  we firstly introduce some  commonly-used feedforward neural networks.
An artificial neural network~(NN) is a feedforward neural network when the connections of its nodes (neurons) do not form a cycle.  
 Some of the nodes are activated by a nonlinear function, and this function is termed the activation function.  {We use $\sigma(\cdot)$ to signify the ReLU activation, i.e., $\sigma(x)=\max
 \{0,x\}$. Thus $\sigma(\cdot)$ is $1$-Lipschitz because $\Abs{\sigma(z)-\sigma(z')} \leq C_\mathrm{L}\Abs{z-z'}$ with $C_\mathrm{L}=1$ for all $z,z'\in\sR$.} We denote the output function  of a neural network as $f(\vx;\vtheta)$,  where $\vx$ is a training sample, and  $\vtheta$ is the vector containing all   parameters of the function.  {We list out  some typical examples  of feedforward neural network and these examples will be studied  later from a different viewpoint. Since nodes of a feedforward neural network do not form a cycle, each network can be analogously treated as a weighted DAG~\cite{neyshabur2017thesis}, see the next section for more details.}

\noindent
{\bf 1. Two-layer Neural Network} 
\begin{equation}
    f_\mathrm{2Layer}(\vx;\vtheta)=\va^\T\sigma(\mW\vx),\label{eq..Architecture2Layer}
\end{equation}
where $\mW\in\sR^{m\times d}$, $\va\in\sR^{m}$, and $\vtheta=\mathrm{vec}\left\{\va,\mW\right\}$, where ``vec'' stands for the standard vectorization operation and it will be used hereafter.

\noindent
{\bf 2. Fully Connected Deep Network} 
\begin{equation}
  \left\{
    \begin{aligned}
      & \vh^{[0]} = \vx, \\
      & \vh^{[l]} = \sigma(\mW^{[l]} \vh^{[l-1]}),  \quad l=1,\cdots,L,\\
      & f_\mathrm{FC}(\vx;\vtheta) = \vu^\T \vh^{[L]},
    \end{aligned}\label{eq..ArchitectureFCNet}
  \right.
\end{equation}
where $\mW^{[l]}\in\sR^{m_l\times m_{l-1}}$, $m_0=d$, $\vu\in\sR^{m_L}$, and $\vtheta=\mathrm{vec}\left\{\{\mW^{[l]}\}_{l=1}^L,\vu\right\}$. 

\noindent
{\bf 3. Residual Network (ResNet)~\cite{e2019prioriRes,He2016Deep}} 
\begin{equation}
  \left\{
    \begin{aligned}
      & \vh^{[0]} = \mV\vx, \\
      & \vg^{[l]} = \sigma(\mW^{[l]} \vh^{[l-1]}),  \quad l=1,\cdots,L,\\
      & \vh^{[l]} = \vh^{[l-1]} + \mU^{[l]} \vg^{[l]}, \quad l=1,\cdots,L,\\
      & f_\mathrm{Res}(\vx;\vtheta) = \vu^\T \vh^{[L]},
    \end{aligned}\label{eq..ArchitectureResNet}
  \right.
\end{equation}
where $\mV\in\sR^{D\times d}$, $\mW^{[l]}\in\sR^{m\times D}$,
$\mU^{[l]}\in\sR^{D\times m}$, $\vu\in \sR^D$, $D\geq d+1$ and $\vtheta=\mathrm{vec}\left\{\mV,\{\mW^{[l]},\mU^{[l]}\}_{l=1}^L,\vu\right\}$.  

\noindent
{\bf 4. Dense Network (DenseNet)~\cite{huang2017densely}}
\begin{equation}
  \left\{
    \begin{aligned}
      & \vh^{[0]} = \mV\vx,\\
      & \vg^{[l]} = \sigma(\mW^{[l]}\vh^{[l-1]}),\quad l=1,\cdots,L,\\
      & \vh^{[l]} = \begin{pmatrix} \vh^{[l-1]}\\ \mU^{[l]}\vg^{[l]} \end{pmatrix},\quad l=1,\cdots,L,\\
      & f_\mathrm{Dense}(\vx;\vtheta) = \vu^\T\vh^{[L]},
    \end{aligned}\label{eq..ArchitectureDenseNet}
  \right.
\end{equation}
where $\mV\in\sR^{k_0\times d}$, $\mW^{[l]}\in\sR^{lm\times (k_0+(l-1)k)}$, $\vU^{[l]}\in\sR^{k\times lm}$, $\vu\in\sR^{k_0+Lk}$, $k_0\geq d+1$, and $\vtheta=\mathrm{vec}\left\{\mV,\{\mW^{[l]},\mU^{[l]}\}_{l=1}^L,\vu\right\}$.   For each $l$,
$\vh^{[l]}$ is the output  of  layer $l$, whose dimension is $k_0+lk$, where $k\geq 1$.  {Unlike ResNet, DenseNet uses concatenation instead of direct addition after ``going through'' the skip connection block. In particular, we} observe that the  dimension of $\vh^{[l]}$ grows linearly with respect to the number of layers, and we term $k$  the growth rate.
Usually, a relatively small growth rate (such as ten or twelve) is sufficient to obtain  state-of-the-art results on standard datasets, such as CIFAR-10 and ImageNet.
\begin{rmk}
 {In practice, every neuron has a bias and a layer is calculated as $\sigma(\widehat{\mW}\hat{\vx}+\vb)$, where  $\widehat{\mW}$ is the parameter matrix, $\hat{\vx}$ is the original data,  and $\vb$ is the bias vector. We point out that our formulations for Fully Connected Deep Network above can represent the counterpart with biased neurons by considering the extended data $\vx=(\hat{\vx},1)$ and parameter matrix \begin{equation*}
    \mW = \left[\begin{array}{cc}
        \widehat{\mW} & \vb \\
         0 & 1
    \end{array}\right].
\end{equation*}}

\end{rmk}
\subsection{Barron Space, Path Norm, and Rademacher Complexity}\label{Subsection....BarronSpacePathNorm}
Inspired by \cite{e2019prioriRes,ew2019prioriTwo}  and  {the} references therein,
we study a specific type of  target functions. Recall that $\Omega=[0,1]^d$ is the unit hypercube, and we consider target functions with domain $\Omega$.
\begin{defi}[Barron function and Barron space]\label{definition....Barron function and Barron space}
  A function $f:\Omega\to\sR$ is called a {\em Barron function} if $f$ admits  the following expectation representation$\mathrm{:}$
  \begin{equation}\label{Subsection...BarronSpace..Eq...FunctionRepresentation}
      f(\vx)=\Exp_{(a,\vw)\sim\rho}\left[a\sigma(\vw^\T\vx)\right],
  \end{equation}
  where $\rho$ is a probability distribution over $\sR^{d+1}$.
  
   For a Barron function, we define the {\em Barron norm} as
    {
    \begin{equation}\label{Subsection...BarronSpace..Eq.....BarronNorm}
      \Norm{f}_\fB:=\inf\limits_{\rho\in \fP_f} \Exp_{(a,\vw)\sim\rho}
      \abs{a}\norm{\vw}_1,
    \end{equation}
    }
    where 
    $
      \fP_f = \{\rho\mid f(\vx) = \Exp_{(a,\vw)\sim\rho}[a\sigma(\vw^\T\vx)]\}
    $.
    
  Equipped with the Barron Norm \eqref{Subsection...BarronSpace..Eq.....BarronNorm},  the {\em Barron space} $\fB$ is the set of Barron functions with finite Barron norm, i.e.,
    \begin{equation}\label{Subsection...BarronSpace..Eq..BarronSpace}
        \fB=\{f:\Omega\to\sR\mid \norm{f}_\fB<\infty\}.
    \end{equation}
\end{defi}
 {Normally, a Barron space contains functions with low complexity, such as sufficiently smooth functions in the Sobolev space $H^s(\sR^d)$ for $s>\frac{d}{2}+1$. It also contains non-smooth functions like those represented by two-layer neural networks.  Moreover, Barron space is strictly bigger than the Reproducing Kernel Hilbert Space (RKHS) induced by the Neural Tangent Kernel (NTK)~\cite{jacot2018neural}, and one may refer to~\cite{wojtowytsch2020kolmogorov,wojtowytsch2020representation} for detailed discussions. On the other hand, generalization of neural networks in the NTK regime has been studied in~\cite{arora2019fine,chen2020generalized}.}

For a feedforward neural network, we define   a parameter-based norm as an analog of the  path norm of two-layer neural networks~\cite{ew2019prioriTwo}, and the $l_1$ path norm of the residual networks~\cite{e2019prioriRes,Neyshabur2017ExploreGeneralization}. We term it the {\em weighted path norm}.  {In order to describe the norm, we shall introduce firstly the concept of {\em path}. To start with, a path is an ordered sequence of scalar operations in the computational process of  neural networks that originates from the input and ends at the output, which could also be  viewed as a connected chain of edges in the computation graph. For example, a path in two-layer neural networks contains one parameter from the input layer, one nonlinear activation, and one parameter from the output layer connecting this activation to the output. For deep neural networks, a path contains a collection of linear operations (including trainable parameters and fixed parameters) and  nonlinear counterparts. Let $\fP$ be a path, and  we   denote the number of linear operations in the path $\fP$ by $\mathrm{len}(\fP)$, and $\mathrm{nl}(\fP)$ for the number of nonlinearities   the path goes through. Finally, let $\{w_i^{\fP}\}_{i=1}^{\mathrm{len}(\fP)}$ be the parameters associated with linear operations throughout the path $\fP$, and we define the weighted path norm as follows.}

\begin{defi}[Weighted path norm]\label{Definition....Weighted Path Norm}
   Given a network  $f(\cdot;\vtheta)$, we define the {\em weighted path norm} of $f$ as 
   \begin{equation}\label{Subsection...BarronSpace..Eq..PathNorm}
       \Norm{f}_\mathrm{P}=\Norm{\vtheta}_\mathrm{P}
       =\sum_{\fP}3^{\textrm{nl}(\fP)}\prod_{l=1}^{\mathrm{len}(\fP)}\Abs{w_l^\fP}.
   \end{equation}
\end{defi} 
Heuristically speaking, the weighted path norm tends to take large account of the paths that undergoes more nonlinearities,  {i.e., by assigning bigger weights to paths going through more nonlinearities.}
 {The weight characterizes the increased complexity of the hypothesis space induced by nonlinearities. In particular, the weight factor $3^{\textrm{nl}(\fP)}$ in~\eqref{Subsection...BarronSpace..Eq..PathNorm} was first taken in~\cite{e2019prioriRes} on the a priori estimate for ResNet, it was taken for the convenience of analysis and may not be optimal. In the latest version of~\cite{e2019prioriRes}, the base number is reduced to $2$.}
 {By using the} \text {\em symbol} for the  adjacency matrix representation proposed in \Cref{subsection....NonlinearDAG}, we come up with a more handy-but-equivalent characterization for the weighted path norm in \Cref{prop..property}, a cornerstone upon on which some useful estimates are derived.

Finally, to bound the generalization gap, we recall the definition of Rademacher complexity.
\begin{defi}[Rademacher complexity]
  Given a family of functions $\fH$ and a set of samples $S=\{\vz_i\}_{i=1}^n$, the \emph{(empirical) Rademacher complexity} of $\fH$ with respect to $S$ is defined as
  \begin{equation}
    \Rad_S(\fH)=\frac{1}{n}\Exp_{\vtau}\left[\sup_{h\in\fH}\sum_{i=1}^n\tau_i h(\vz_i)\right],\label{eq..RadComplexityDefi}
  \end{equation}
  where the $\left\{\tau_i\right\}_{i=1}^n$ are i.i.d. random variables with $\Prob\{\tau_i=1\}=\Prob\{\tau_i=-1\}=\frac{1}{2}$.
\end{defi}

\section{Nonlinear Weighted DAG and Adjacency Matrix Representation}\label{sec..NonlinearAdjacencyMatrix}

In this section, we systematically present our novel representation for feedforward neural networks. We discuss several properties obtained from the incorporation of  this new representation in \Cref{subsection...properties},  and   some concrete examples are given out in \Cref{subsection....examples}, using the above-mentioned  networks in \Cref{subsection....FeedForward}.

\subsection{Adjacency Matrix Representation and Symbols for DAG}\label{subsection....NonlinearDAG}
  {We start this section by bringing out  the definitions of the directed graph and neural network.}
 {\begin{defi}[Directed graph]
    A \emph{directed graph} $G=(V,E)$ is an ordered pair of sets. Here $V$ is called the set of \emph{nodes} (or \emph{vertices} or \emph{neurons}), and $E\subset V\times V$ is called the set of \emph{edges} (or more precisely, \emph{directed edges}). For vertices $v_i,v_j\in V$, if $(v_j,v_i)\in E$, then the edge is denoted by $e_{i\leftarrow j}$ and said to be directed from the \emph{tail} $v_j$ to the \emph{head} $v_i$. A \emph{cycle} is a finite sequence of nodes $v_0,v_1,\cdots,v_k$ such that $v_0=v_k$ and $(v_i,v_{i+1})$ is an edge for all $i=0,1,\cdots,k-1$. A \emph{directed acyclic graph (DAG)} is a directed graph that has no cycles.

\end{defi}}

 {\begin{defi}[Neural network] \label{defi...NN}
    A \emph{neural network} $\mathrm{NN}$
    consists of an \emph{architecture} (a directed graph) $G=(V,E)$ with the partition of edges $E:=E_\mathrm{fix}\sqcup E_\mathrm{para}\sqcup E_\mathrm{non}$, \emph{a collection of fixed weights/parameters} (a real-valued function on $E_\mathrm{fix}$) $\gamma: E_\mathrm{fix}\to \sR$, \emph{a collection of trainable weights/parameters} (a real-valued function on $E_\mathrm{para}$) $\theta: E_\mathrm{para}\to \sR$, and  \emph{an activation} (usually, a nonlinear function) $\sigma(\cdot): \sR \to \sR$. Here $\sqcup$ is the disjoint union of sets. We  write $\mathrm{NN}(G,\vtheta,\vgamma,\sigma)$ to signify this neural network with
    \begin{align}
        \vgamma&:=\mathrm{vec}\{\gamma_{ij}:=\gamma(e_{i\leftarrow j})\mid e_{i\leftarrow j}\in E_\mathrm{fix}\},\label{eq..fixedWeight}\\
       \vtheta&:=\mathrm{vec}\{\theta_{ij}:=\theta(e_{i\leftarrow j})\mid e_{i\leftarrow j}\in E_\mathrm{para}\}.\label{eq..trainableWeight}
    \end{align} 
    We also define $w:E_\mathrm{fix}\sqcup E_\mathrm{para}\to \sR$ by setting $w(e_{i\leftarrow j}):=\gamma(e_{i\leftarrow j})$ on $E_\mathrm{fix}$ and $w(e_{i\leftarrow j}):=\theta(e_{i\leftarrow j})$ on $E_\mathrm{para}$. 
    We further set $N:=\#E$ as the number of edges, and  $N_\mathrm{fix}:=\#E_\mathrm{fix}$, $N_\mathrm{para}:=\#E_\mathrm{para}$, and $N_\mathrm{non}:=\#E_\mathrm{non}$. Obviously, $N=N_\mathrm{fix}+N_\mathrm{para}+N_\mathrm{non}$.
\end{defi}}

 {We remark that $\vgamma$ in \eqref{eq..fixedWeight} is pre-determined and fixed, while $\vtheta$ in \eqref{eq..trainableWeight} is trainable. For the existence results in main theorems, we refer to the existence of   $\vtheta$ after  the network architecture $G=(V,E)$, activation $\sigma$ and fixed weights $\vgamma$ are given~(See  \Cref{thm..ApproxError} and \Cref{thm..Apriori}).}

 {\begin{defi}[Feedforward neural network] 
    A \emph{feedforward neural network} is a neural network $\mathrm{NN}(G,\vtheta,\vgamma,\sigma)$ in which the graph $G=(V,E)$ contains no cycles. An \emph{input/source neuron} (or \emph{output/sink neuron}) is a vertex in $V$ that is not the head (or tail) of any edge in $E$. The set of input neurons and output neurons are denoted by $V_\mathrm{in}$ and $V_\mathrm{out}$  respectively. The vertices in $V_\mathrm{hid}=V\backslash (V_\mathrm{in}\cup V_\mathrm{out})$ are called the \emph{hidden neurons}. The input dimension and output dimension of the network are $d=\abs{V_\mathrm{in}}$ and $d'=\abs{V_\mathrm{out}}$.
\end{defi}}

\begin{figure}
 \centering
\includegraphics[width=\textwidth]{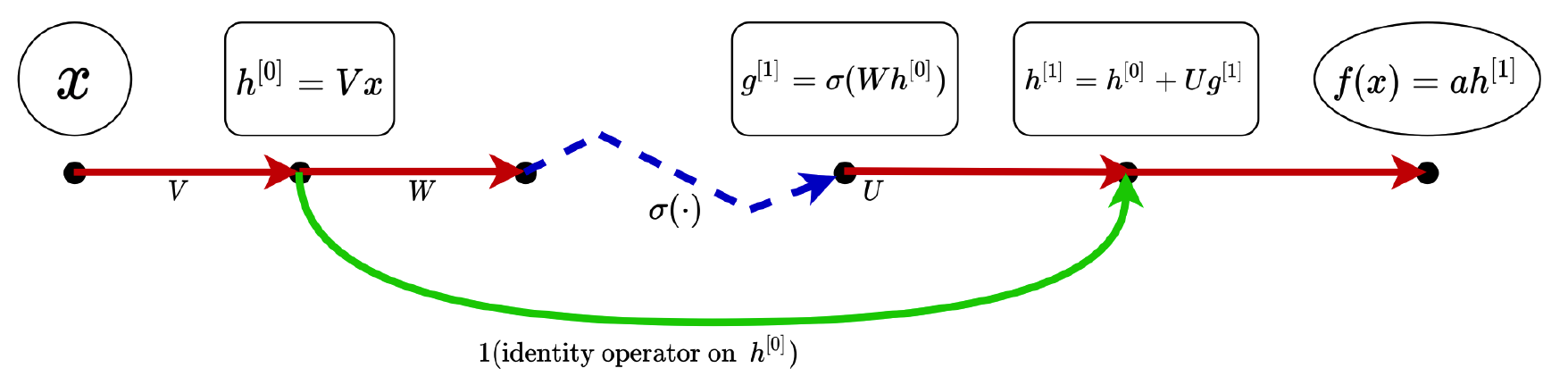}
    \caption{a particular nonlinear weighted DAG for ResNet }
    \label{fig....Example}
\end{figure}

 {Note that for all four examples given in \Cref{subsection....FeedForward}, $d'=1$.}

 {
\begin{defi}[Adjacency matrix representation]
    Given any feedforward neural network architecture $G=(V,E)$, activation $\sigma$, and fixed weights $\vgamma$, we define its {\em  adjacency matrix representation} $\mA(\cdot,\vgamma,\sigma): \vtheta\mapsto\mA(\vtheta,\vgamma,\sigma)$ for each $\vtheta: E_\mathrm{para}\to\sR$,  the image $\mA(\vtheta,\vgamma,\sigma)$ is a matrix whose entries $(\mA(\vtheta,\vgamma,\sigma))_{ij}$ are operators from $\sR$ to $\sR$. More precisely, given any $\vtheta: E_\mathrm{para}\to\sR$, we define
    \begin{equation}
      (\mA(\vtheta,\vgamma,\sigma))_{ij}=\left\{
        \begin{array}{ll}
            \theta_{ij},\quad e_{i\leftarrow j}\in E_\mathrm{para},\\
            \gamma_{ij},\quad e_{i\leftarrow j}\in E_\mathrm{fix},\\
            \sigma(\cdot),\quad e_{i\leftarrow j}\in E_\mathrm{non},\\
            0,\quad \text{otherwise},
        \end{array}
      \right.
    \end{equation}
    where $\theta_{ij}$ and $\gamma_{ij}$ are considered as linear operators from $\sR$ to $\sR$, i.e., multiplication, and $\sigma(\cdot):\sR\to\sR$ refers specifically to the operation of the activation function applied accordingly to its  input.
\end{defi}
}

{ { We remark that Daniely~\cite{Daniely2017SGD} also used DAGs to represent architectures of neural networks. However, the way that the DAGs is used in our paper is different from~\cite{Daniely2017SGD} in which the nodes represent nonlinear operations and edges represent linear operations. In our paper, all the operations, linear or nonlinear, are represented by edges. Thus, all nonlinear activations are explicitly represented by entries of the adjacency matrix, which helps us to study various architectures with skip connections.}}

 For simplicity, $\mA(\vtheta,\vgamma,\sigma)$ is denoted by  $\mA(\vtheta,\sigma)$  or even $\mA$ hereafter with no confusion. We claim that $\mA(\vtheta,\vgamma,\sigma)$ is a nonlinear operator acting on $N$ dimensional vector-valued functions,  { where $N$ is given in \Cref{defi...NN} and pre-determined by network architecture. }We impose the nodes consisting of components from the training sample $\vx$ to be source nodes, and the single node of the output function the sink node.  {Since all
connections are from nodes with smaller index to nodes with bigger index, then
 without loss of generality, $\mA$ can be written into a strictly lower triangular matrix. We
also point out that there is no nonzero entries on the main diagonal of $\mA$, since we only
study feedforward networks without recurrence.} More precisely, if we  set the value of the source nodes to be $h_1(\vx),\cdots, h_d(\vx)$ and the sink node  $h_N(\vx)$,  then $\mA$ is of size $N\times N$, and the output  at $h_i(\vx)$ reads inductively for $i>d$,
\begin{equation}\label{Subsection...Adj...Eq...SourceNodeatith..middle}
    h_i(\vx)=\sum_{j: e_{i\leftarrow j}\in E_\mathrm{para}}\theta_{ij}h_j(\vx)+\sum_{j: e_{i\leftarrow j}\in E_\mathrm{fix}}\gamma_{ij}h_j(\vx)+\sum_{j: e_{i\leftarrow j}\in E_\mathrm{non}}\sigma(h_j(\vx)).
\end{equation}
Specifically,  we define that
 {\begin{defi}[Feedforward neural network function]
 Given any feedforward neural network architecture $G=(V,E)$, activation $\sigma$, and fixed weights $\vgamma$, then for each $\vtheta: E_\mathrm{para}\to\sR$, we define its \emph{(feedforward) neural network function} as a mapping $f(\cdot,\vtheta):\sR^{d}\to \sR^{d'}$, where $d$ is  the input dimension of the network, and $d'$ is the output dimension of the network, such that for any training sample $\vx\in\sR^d$,
 \begin{equation}\label{Subsection...Adj...Eq...SourceNodeatNth..end}
 \begin{aligned}
     f(\vx,\vtheta)&:=h_N(\vx)\\&=\sum_{i: e_{N\leftarrow i}\in E_\mathrm{para}}\theta_{Nj}h_i(\vx)+\sum_{i: e_{N\leftarrow i}\in E_\mathrm{fix}}\gamma_{Ni}h_i(\vx)+\sum_{i: e_{N\leftarrow i}\in E_\mathrm{non}}\sigma(h_i(\vx)).
\end{aligned}
 \end{equation}
\end{defi}
}

Next we define the {\em symbol}~ for  the adjacency matrix representation.

\begin{defi}[Symbol]
     Given  $\mA(\vtheta,\vgamma,\sigma)$, we define the {\em symbol} $\mA(\vtheta,\vgamma,\xi)$ as
    \begin{equation}
      (\mA(\vtheta,\vgamma,\xi))_{ij}=\left\{
        \begin{array}{ll}
            \theta_{ij},\quad e_{i\leftarrow j}\in E_\mathrm{para},\\
            \gamma_{ij},\quad e_{i\leftarrow j}\in E_\mathrm{fix},\\
            \xi,\quad e_{i\leftarrow j}\in E_\mathrm{non},\\
            0,\quad \text{otherwise},
        \end{array}
      \right.
    \end{equation}
    where $\xi$ refers to the operation of direct multiplication of the numeric $\xi$ to its input.
\end{defi}
 { In short, the symbol $\mA(\vtheta,\vgamma,\xi)$ is the DAG representation of a linear neural network with
weights $\xi$ where the original network had nonlinearities.}
 {
We remark that the term symbol in this paper is inspired from the definition of the symbol for pseudo-differential operators~\cite{taylor2012pseudodifferential}. But they are quite different because for the latter $\partial_i$ is replaced by $\xi_i$, not a single variable $\xi$ in all dimensions.}
Similarly, we denote $\mA(\vtheta,\vgamma,\xi)$ by  $\mA(\vtheta,\xi)$ hereafter, and  symbol for ResNet illustrated in \Cref{fig....Example} is  shown in \Cref{fig....Realization}. We observe that since $\xi$ performs exactly like the fixed weights $\vgamma$, hence  $\xi$ transforms the whole nonlinear connection elements belonging to   set   $E_\mathrm{non}$ into new elements of set $E_\mathrm{fix}$.  Therefore, the blue dashed   line in \Cref{fig....Example} shall be replaced correspondingly by a solid line in \Cref{fig....Realization}.

 \begin{figure}
 \centering
 \includegraphics[width=\textwidth]{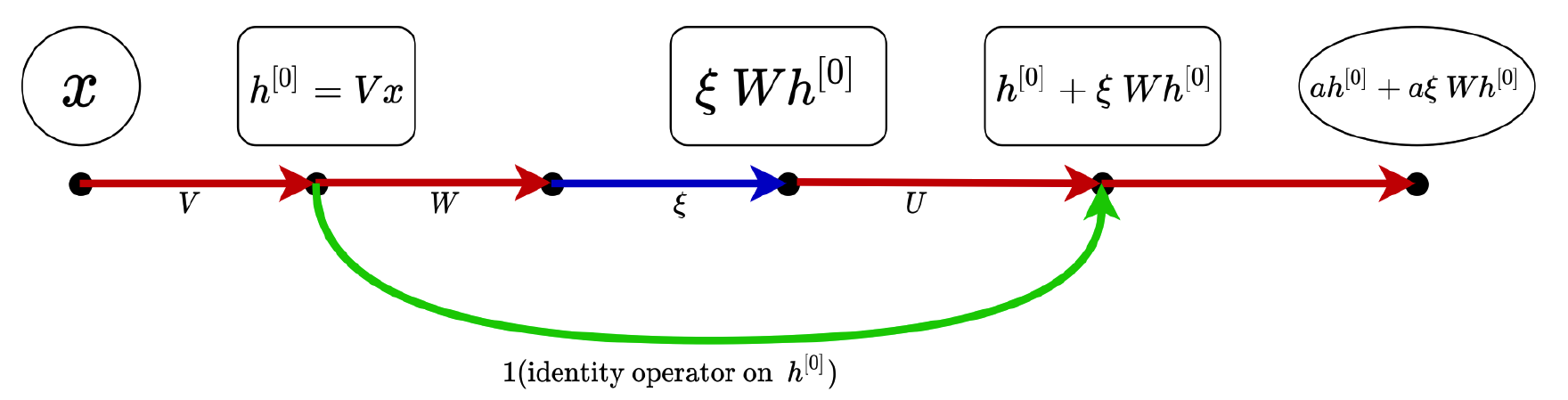}
    \caption{Symbol of the ResNet in \Cref{fig....Example}}
    \label{fig....Realization}
\end{figure}
\subsection{Several Properties of the Adjacency Matrix Representation}\label{subsection...properties}

Given a feedforward neural network, naturally we obtain its adjacency matrix  representation $\mA(\vtheta,\sigma)$,  and for a specific input sample $\vx\in\sR^d$, we define the series of vectors $\left\{\vz_s\right\}_{s=0}^{\infty}$:
\begin{equation}
\begin{aligned}
\vz_0&=(\vx^\T,0,\cdots,0)^\T=\begin{pmatrix}
    \vx\\
    \vzero_{ (N-d)\times 1}
\end{pmatrix}\in\sR^N\\ 
\vz_s&=\vz_0+\mA(\vtheta,\sigma)\vz_{s-1},~~~~s\geq 1.
\end{aligned}
\end{equation}
We observe that  $\mA(\vtheta,\sigma)$ is of size $N\times N$.
Moreover, we define two special vectors in $\sR^N$
\begin{align*}
\vone_\mathrm{in}&=\left(\underbrace{1,1,\cdots, 1}_{\#~\text{of}~1~\text{is}~d},0,0,\cdots, 0\right)^\T=\begin{pmatrix}
    \vone_{d\times 1}\\
    \vzero_{(N-d)\times 1}
\end{pmatrix},\\
\vone_\mathrm{out}&=\left(0,0,\cdots, 0,1 \right)^\T=   \begin{pmatrix}\vzero_{ (N-1)\times 1}\\1\end{pmatrix} ,
\end{align*}
and
the projection matrix $\mP_0$ with respect to the input
$\mP_0=\diag(\vone_\mathrm{in})$,
whose  size is also  $N\times N$.
 We list out several   properties relating to the adjacency matrix representation  $\mA(\vtheta,\sigma)$ and its symbol $\mA(\vtheta,\xi)$.

\begin{prop}\label{prop..property}
 ~\\
    \begin{enumerate}
        \item (Nilpotent)  {For any $\mA(\vtheta,\xi)$,} there exists a positive integer $s_0$, such that for all~$s\geq s_0$, $$\mA^s(\vtheta,\xi)=\mzero.$$ \\
        \item  {($L^\infty$ bound of the finite difference)} For all $s\geq 1$, $$\vz_{s+1}-\vz_s=\mA(\vtheta,\sigma)\vz_s-\mA(\vtheta,\sigma)\vz_{s-1},$$ and $$\norm{\vz_{s+1}-\vz_s}_\infty\leq \norm{\mA^s(\abs{\vtheta},1)\Abs{\vz_1-\vz_0}}_\infty,$$
        where $\Abs{\vz }$ with $\vz$ being a vector means taking the absolute values of all the entries of the vector.
        \item (Limit) There exists a limit for  the series of vectors $\left\{\vz_s\right\}_{s=1}^{\infty}$, i.e., $$\vz_\infty=\lim_{s\to\infty}\vz_s.$$\\
        \item (Representation for network output) The output function of the neural network $f(\vx)$  reads  $$f(\vx)=\vone_\mathrm{out}^\T\vz_\infty.$$\\
        \item (Fixed point iteration) Define $\bar{\mA}=\mP_0+\mA$, then $$\vz_s=\bar{\mA}\vz_{s-1}.$$  
         {Thus, $\vz_\infty=\bar{\mA}^\infty\vz_0:=\lim_{s\to+\infty}\bar{\mA}^s\vz_0$ exists and is a fixed point for the operator $\bar{\mA}(\vtheta,\sigma)$.}
        \item (Alternate expression of weighted path norm) Given $\vtheta$,
        its weighed path norm reads \begin{equation}\label{Prop...eq..AlternateWeightedPathNorm}
        \begin{aligned}
 \norm{\vtheta}_\mathrm{P}&=\vone_\mathrm{out}^\T\sum_{s=0}^\infty\mA^s(\abs{\vtheta},3)\vone_\mathrm{in} =\vone_\mathrm{out}^{\T}(\mI_{N\times N}-\mA(\abs{\vtheta},3))^{-1}\vone_\mathrm{in}\\&=\vone_\mathrm{out}^\T\bar{\mA}^\infty(\abs{\vtheta},3)\vone_\mathrm{in}.
        \end{aligned}
        \end{equation}

          \item (Number of parameters and nonlinear connections) Given symbol $\mA(\vtheta,\vgamma,\xi)$, then
        \begin{align*}
          N_{\mathrm{para}}=\#E_\mathrm{para}
          &= \norm{\mA(\vone_{\vtheta},\vzero,0)}_{1,1},\\
          N_{\mathrm{fix}}=\#E_\mathrm{fix}
          &= \norm{\mA(\vzero,\vone_{\vgamma},0)}_{1,1},\\
            N_{\mathrm{non}}=\#E_\mathrm{non}
          &= \norm{\mA(\vzero,\vzero,1)}_{1,1},
        \end{align*}
        where $\vone_{\vtheta}$ is obtained by replacing all the components of $\vtheta$ by $1$, and $\vone_{\vgamma}$ is attained similarly by replacing all the components of $\vgamma$ by $1$.
    \end{enumerate}
\end{prop}

\begin{rmk}
   {The proof of \Cref{prop..property} is given in \Cref{appendix},} and the alternate expression of weighted path norm  in \eqref{Prop...eq..AlternateWeightedPathNorm} is  useful in the proof of \Cref{lm:property_H} and \Cref{thm..ApproxError}.
\end{rmk}

\subsection{Examples}\label{subsection....examples}
Individually, the  adjacency matrix representation for each  feedforward neural network mentioned beforehead in \Cref{subsection....FeedForward} is presented as follows.

\begin{exam}[Two-layer network]
    The adjacency matrix representation $\mA$ of the two-layer network \eqref{eq..Architecture2Layer} reads 
  \begin{equation*}
    \begin{pmatrix}
      \mzero_{d\times d}&&&\\
      \mW&\mzero_{m\times m}&&\\
      &\sigma \mI_{m\times m}&\mzero_{m\times m}&\\
      &&\vu^\T&0
    \end{pmatrix}.
  \end{equation*}
\end{exam}

 {
\begin{exam}[Fully connected network] \label{ex...Fully}
    For fully connected deep network \eqref{eq..ArchitectureFCNet}, the adjacency matrix representation $\mA$ takes the matrix form   
    \begin{equation*}
        \begin{pmatrix}
            \mzero & & & & & & &\\
             \mB^{[1]} & \mzero & & & & &\\
             & \mB^{[2]} & \mzero & & & &\\
             & & \ddots & \ddots & & &\\
             & & & \mB^{[l]} & \mzero & &\\
             & & & & \ddots & \ddots &\\
             & & & & & \mB^{[L]} &\mzero &\\
             & & & & & & \vu^\T &\mzero
        \end{pmatrix},
    \end{equation*}
where for each  $l=1,\cdots,L$,  the matrix block $\mB^{[l]}$ reads
  \begin{equation*}
    \mB^{[l]}=\begin{pmatrix}
     \mW^{[l]}&\mzero_{m_{l}\times m_{l}}\\
     &\sigma\mI_{m_{l}\times m_{l}}
    \end{pmatrix}.
  \end{equation*}

\end{exam}
}
 {
For \Cref{ex...Resnet} and \Cref{ex...Densenet}, the matrix representation $\mA$ incorporates the form
\begin{equation*}
        \begin{pmatrix}
            \mzero & & & & & & &\\
            \mV & \mzero & & & & & &\\
            & \mB^{[1]} & \mzero & & & & &\\
            & & \mB^{[2]} & \mzero & & & &\\
            & & & \ddots & \ddots & & &\\
            & & & & \mB^{[l]} & \mzero & &\\
            & & & & & \ddots & \ddots &\\
            & & & & & & \mB^{[L]} &\mzero &\\
            & & & & & & & \vu^\T &\mzero
        \end{pmatrix},
    \end{equation*}
    with  $\mB^{[l]}$ to be specified for each case. }
     {
\begin{exam}[ResNet]\label{ex...Resnet}
  For  ResNet \eqref{eq..ArchitectureResNet}, each $\mB^{[l]}$ reads 
  \begin{equation*}
    \mB^{[l]}=\begin{pmatrix}
      \mW^{[l]}&\mzero_{m\times m}&\\
      &\sigma\mI_{m\times m} &\mzero_{m\times m}\\
      \mI_{D\times D}&&\mU^{[l]}
    \end{pmatrix}.
  \end{equation*}
\end{exam}
}
 {
\begin{exam}[DenseNet]\label{ex...Densenet}
   For DenseNet \eqref{eq..ArchitectureDenseNet}, each $\mB^{[l]}$ reads
  \begin{equation*}
    \mB^{[l]}=
    \begin{pmatrix}
      \mW^{[l]}  & \mzero_{lm\times lm} &  \\
        &\sigma\mI_{lm\times lm}  &\mzero_{lm\times lm} \\
      \bar{\mI}_{(k_0+lk)\times (k_0+(l-1)k)}  & &\bar{\mU}^{[l]}  
    \end{pmatrix},
  \end{equation*}
  where for each $l=1,\cdots, L$,
  \begin{align*}
    \bar{\mI}_{(k_0+lk)\times (k_0+(l-1)k)}=
    \begin{pmatrix}
      \mI_{(k_0+(l-1)k)\times(k_0+(l-1)k)}\\\mzero_{k\times (k_0+(l-1)k)}
    \end{pmatrix},\quad
    \bar{\mU}^{[l]}=
    \begin{pmatrix}
      \mzero_{(k_0+(l-1)k)\times lm}\\\mU^{[l]}
    \end{pmatrix}.
  \end{align*}
\end{exam}
}

\section{Main Results}\label{Section...Main Results}

\subsection{Setup}
The goal of the supervised learning is to find a network function that fits the training samples and also generalizes well on  test data. Our problem of interest is to learn a function from a   sample dataset of $n$ examples $S:=\{(\vx_i,y_i)\}_{i=1}^n$ drawn i.i.d  from an underlying distribution $\fD$, where for each $i$, $\vx_i\in\Omega=[0,1]^d$, and our target function is $f^{*}:\Omega \to [0,1]$ with $y_i=f^{*}(\vx_i)\in[0,1]$.
Similar to the cases of  ResNet~\cite{e2019prioriRes} and two-layer~\cite{ew2019prioriTwo}, a  truncation operator shall be defined such that for any function $h:\sR^d\to \sR$, $\fT_{[0,1]}h(\vx)=\min\left\{\max\{h(\vx),0\},1\right\}$.
With an abuse of notation, we still use $f$ to denote $\fT_{[0,1]}f$ henceforth. Consider  the truncated square
loss \begin{equation}\label{Subsection....mainResults...Eq...TruncationPopulationRisk}
    \ell(\vx,\vtheta)=\frac{1}{2}\Abs{\fT_{[0,1]}f(\vx;\vtheta)-f(\vx)}^2
\end{equation} 
in the sequel,  then the  empirical risk is defined as 
\begin{equation}\label{eq..EmpiricalRisk}
   \RS(\vtheta)=\frac{1}{n}\sum_{i=1}^{n}\ell(\vx_i,\vtheta),
\end{equation} 
and the  population risk is defined as 
\begin{equation}\label{Subsection...MainREsults....eq..PopulationRisk}
  \RD(\vtheta)=\Exp_{\vx\sim\fD}\ell(\vx,\vtheta).
\end{equation}
The  ultimate goal of our paper is to minimize $\RD(\vtheta)$.
\subsection{Main Theorems}
We consider a feedforward neural network with its adjacency matrix representation satisfying the following assumptions.

\begin{assump}\label{assump..InputLinPara}
   { Given any feedforward neural network architecture $G=(V,E)$, activation $\sigma$, and fixed weights $\vgamma$}, we assume that  for any edge~$e_{i\leftarrow j}$~with $j\leq d$,  $e_{i\leftarrow j}\in E_{\mathrm{para}}$. In other words, we assume that there exists no edge~$e_{i\leftarrow j}$ with $j\leq d$, such that $e_{i\leftarrow j}\in E_{\mathrm{fix}}$ or $e_{i\leftarrow j}\in E_\mathrm{non}$.
\end{assump}
\begin{assump}[Shortcut connections]\label{assump..BlockMatrix}
 { Given any feedforward neural network architecture $G=(V,E)$, activation $\sigma$, and fixed weights $\vgamma$,} we assume that for any $\vtheta:E_\mathrm{para}\to\sR$ its adjacency matrix representation takes either the form
      \begin{equation}\label{Definition...Eq...TwoLayerRepresentation}
        \mA(\vtheta,\vgamma,\sigma)=\begin{pmatrix}
          \mzero&&&\\
          \vV&\mzero&&\\
          &\sigma \mI&\mzero&\\
          &&\vu^\T&0
        \end{pmatrix},
      \end{equation}
or the form
    \begin{equation}\label{Definition...Eq...LongRepresentation}
        \mA(\vtheta,\vgamma,\sigma)=
        \begin{pmatrix}
            \mzero & & & & & & &\\
            \mV & \mzero & & & & & &\\
            & \mB^{[1]} & \mzero & & & & &\\
            & & \mB^{[2]} & \mzero & & & &\\
            & & & \ddots & \ddots & & &\\
            & & & & \mB^{[l]} & \mzero & &\\
            & & & & & \ddots & \ddots &\\
            & & & & & & \mB^{[L]} &\mzero &\\
            & & & & & & & \vu^\T &\mzero
        \end{pmatrix},
    \end{equation}
    with    $\mV$ taking size  $d_0\times d$, length of  vector $\vu$ being  $d_L$, and for each matrix block~$\mB^{[l]}$:
    \begin{equation*}
        \mB^{[l]}=
        \begin{pmatrix}
            \mW^{[l]}_{} &  &\\
             & \sigma\mI_{p_l\times p_l} & \\
            \mS^{[l]} &  & \mU^{[l]}
        \end{pmatrix},
    \end{equation*}
    where  $\mB^{[l]}$ has size  $(2p_{l}+d_{l})\times(2p_{l}+d_{l-1})$,   its components $\left\{\mW^{[l]},\sigma\mI_{p_l\times p_l}, \mS^{[l]},  \mU^{[l]}\right\}$ respectively  has size  $p_l\times d_{l-1}$, $p_l\times p_l$, $d_{l}\times d_{l-1}$, and $d_{l}\times p_l$. Moreover, we assume further that $\mS^{[l]}$ is   a  row permutation matrix of 
    $\begin{pmatrix}
      \mI_{d_{l-1}\times d_{l-1}}\\\mzero_{(d_{l}-d_{l-1})\times d_{l-1}}
    \end{pmatrix}$, and for all $l=1,\cdots,L$, it holds that  {$d_l \geq d_{l-1}, \min\{d_0, d_l, p_l\} \geq d+1$.} 
\end{assump}
Evidently, any feedforward neural network satisfying \Cref{assump..BlockMatrix}   satisfies \Cref{assump..InputLinPara}. With this in mind, we proceed to the statement of our main theorems. 

\begin{thm}[Approximation error]\label{thm..ApproxError}
    { Given any feedforward neural network architecture $G=(V,E)$, activation $\sigma$, fixed weights $\vgamma$, suppose that \Cref{assump..BlockMatrix} holds,} then for any target function $f^*\in\fB$, there exists a feedforward neural network function $f(\cdot;\tilde{\vtheta})$   with $\norm{\tilde{\vtheta}}_\mathrm{P}\leq 6\norm{f^*}_\fB$, such that
  \begin{equation}\label{Thm...eq..ApproErrorEstimate}
    \RD(\tilde{\vtheta}):=\Exp_{x\sim\fD}\tfrac{1}{2}(f(\vx;\tilde{\vtheta})-f^*(\vx))^2\leq \frac{3\norm{f^*}^2_\fB}{2N_{\mathrm{non}}}.
  \end{equation}
\end{thm}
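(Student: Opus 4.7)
The strategy is to approximate $f^*$ by a Monte Carlo sum over $m:=N_{\mathrm{non}}$ i.i.d.\ neurons sampled from a near-optimal Barron representation, embed this sum into a network conforming to \Cref{assump..BlockMatrix} by routing each sampled neuron through exactly one nonlinearity and then through the $0/1$ shortcut matrices, and combine the resulting expected-risk and expected-path-norm bounds via Markov's inequality with a union bound.

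First, fix $\epsilon>0$ and choose $\rho\in\fP_{f^*}$ with $(\Exp_{(a,\vw)\sim\rho}|a|^2\norm{\vw}_1^2)^{1/2}\leq(1+\epsilon)\Norm{f^*}_\fB$. Drawing $(a_j,\vw_j)_{j=1}^m$ i.i.d.\ from $\rho$, form $f_m(\vx)=\frac{1}{m}\sum_{j=1}^m a_j\sigma(\vw_j^\T\vx)$. Since $\vx\in[0,1]^d$ implies $|\sigma(\vw_j^\T\vx)|\leq\norm{\vw_j}_1$, a pointwise variance bound gives
\begin{equation*}
\Exp_{\rho^m}(f_m(\vx)-f^*(\vx))^2=\frac{1}{m}\operatorname{Var}_\rho[a\sigma(\vw^\T\vx)]\leq\frac{1}{m}\Exp_\rho|a|^2\norm{\vw}_1^2\leq\frac{(1+\epsilon)^2\Norm{f^*}_\fB^2}{m}.
\end{equation*}
Because $f^*(\vx)\in[0,1]$, the truncation $\fT_{[0,1]}$ can only shrink the pointwise error, so integrating against $\fD$ yields $\Exp_{\rho^m}\RD(\tilde\vtheta)\leq\frac{(1+\epsilon)^2\Norm{f^*}_\fB^2}{2m}$.

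To realize $f_m$ inside \Cref{assump..BlockMatrix}, split the $m$ neurons across the $L$ blocks with $\sum_l p_l=m$; take $\mV$ to embed $\vx$ into the first $d$ coordinates of $\sR^{d_0}$ (legal since $d_0\geq d+1$); place the $p_l$ vectors assigned to block $l$ as the top rows of $\mW^{[l]}$; route each activation $\sigma(\vw_j^\T\vx)$ into a fresh dedicated coordinate of $\vh^{[l]}$ via $\mU^{[l]}$ with scalar entry $a_j/m$; and let the subsequent shortcut matrices $\mS^{[l']}$ (each a $0/1$ row permutation of $(\mI;\mzero)$) carry that coordinate unchanged to the sink, where $\vu$ has a $1$ in the matching slot. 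Every contributing path then traverses exactly one nonlinearity with accumulated weight $|w_{j,k}|\cdot|a_j|/m$, so by the alternate formula \eqref{Prop...eq..AlternateWeightedPathNorm} of \Cref{prop..},
\begin{equation*}
\Norm{\tilde\vtheta}_\mathrm{P}=\frac{3}{m}\sum_{j=1}^m|a_j|\norm{\vw_j}_1,
\end{equation*}
and Cauchy--Schwarz gives $\Exp_{\rho^m}\Norm{\tilde\vtheta}_\mathrm{P}\leq 3(1+\epsilon)\Norm{f^*}_\fB$.

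Finally, Markov's inequality bounds the probability of $\RD(\tilde\vtheta)>\frac{3(1+\epsilon)^2\Norm{f^*}_\fB^2}{2m}$ by $1/3$ and of $\Norm{\tilde\vtheta}_\mathrm{P}>6(1+\epsilon)\Norm{f^*}_\fB$ by $1/2$; since $1/3+1/2<1$, there exists a realization of the samples for which both bounds hold simultaneously, and letting $\epsilon\to 0$ together with a standard compactness argument (the parameter set with bounded weighted path norm is finite-dimensional and closed) closes the proof. The main obstacle is the bookkeeping in the embedding step: verifying that the allowed widths $d_l,p_l$ can be arranged so that $\sum_l p_l=N_{\mathrm{non}}$ while keeping $\min\{d_0,d_l,p_l\}\geq d+1$, and confirming that the $0/1$ structure of $\mS^{[l]}$ contributes no multiplicative factor, so that \Cref{prop..}(6) indeed delivers the clean equality used above rather than only an inequality.
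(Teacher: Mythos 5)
Your proposal is correct and follows essentially the same route as the paper: a Monte Carlo/variance bound over i.i.d.\ neurons from a (near-)optimal Barron representation combined with Markov's inequality on both the risk and the path norm (the paper packages this step as the cited two-layer result, \Cref{thm..ApproErrorTwoLayer}, proved in its appendix by exactly this argument), followed by an explicit embedding of the resulting two-layer network into the block structure of \Cref{assump..BlockMatrix} with $\sum_l p_l = N_{\mathrm{non}}$ and a path-norm computation via \Cref{prop..}(6). The only differences are cosmetic bookkeeping (you give each neuron a dedicated output coordinate while the paper accumulates all $a_j$'s into a single coordinate read by $\vu$, and you handle a possibly unattained infimum in the Barron norm via $\epsilon$ and a limit where the paper simply assumes an optimal $\rho$ exists).
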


\begin{thm}[A posteriori estimate]\label{thm..Aposteriori}
 {  Given any feedforward neural network architecture $G=(V,E)$, activation $\sigma$, fixed weights $\vgamma$,  suppose that  \Cref{assump..BlockMatrix} holds,} then for any $\delta\in(0,1)$, with probability at least $1-\delta$ over the choice of the  training sample $S$, the following holds
    \begin{equation}
        \abs{\RD(\vtheta) - \RS(\vtheta)}
        \leq(\norm{\vtheta}_\mathrm{P}+1)\frac{6\sqrt{2
        \log(2d)}+\frac{1}{2\sqrt{2}}}{\sqrt{n}} + \frac{1}{2}\sqrt{\frac{
        \log(\frac{\pi^2}{3\delta})}{2n}}.
    \end{equation}
\end{thm}

 {
\begin{thm}[A priori estimate]\label{thm..Apriori}
  Given any feedforward neural network architecture $G=(V,E)$, activation $\sigma$, fixed weights $\vgamma$, suppose that \Cref{assump..BlockMatrix} holds, $f^*\in\fB$, $\lambda= \Omega(\sqrt{\log d})$, and that  $\vtheta_{S,\lambda}$ is an optimal solution for the regularized model \begin{equation}\label{Subsection...MainREsults...eq..RegularizedRisk}
    J_{S,\lambda}(\vtheta):=\RS(\vtheta)+\frac{\lambda}{\sqrt{n}}\norm{\vtheta}_\mathrm{P},
\end{equation} 

  then  for any $\delta\in(0,1)$, with probability at least $1-\delta$ over the choice of the training sample $S$,  the population risk satisfies
  \begin{equation}\label{eq..AprioriEstimate}
    \begin{aligned}
        \RD(\vtheta_{S,\lambda})
        &:=\Exp_{\vx\sim\fD}\tfrac{1}{2}(f(\vx;\vtheta_{S,\lambda})-f^*(\vx))^2\\
        &\lesssim \frac{\norm{f^*}_{\fB}^2}{N_\mathrm{non}}+  \frac{1}{\sqrt{n}}
        \left(\lambda(\norm{f^*}_{\fB}+1)+\sqrt{\log 1/\delta}\right).
    \end{aligned}
  \end{equation}
\end{thm}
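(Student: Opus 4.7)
The plan is to combine the approximation bound of \Cref{thm..ApproxError} with the uniform a posteriori bound of \Cref{thm..Aposteriori} through the optimality of the regularized minimizer. The role of the hypothesis $\lambda = \Omega(\sqrt{\log d})$ is precisely to let the (a priori uncontrolled) weighted path norm of the random parameter $\vtheta_{S,\lambda}$ be absorbed by the regularizer when it is compared against a deterministic reference parameter produced by the approximation theorem; this is the source of the otherwise mysterious $\sqrt{\log d}$ scaling in $\lambda$.

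First I would invoke \Cref{thm..ApproxError} to obtain a parameter $\tilde{\vtheta}$ yielding \Cref{assump..BlockMatrix} with $\norm{\tilde\vtheta}_{\mathrm{P}}\leq 6\norm{f^*}_\fB$ and $\RD(\tilde\vtheta) \leq \tfrac{3\norm{f^*}_\fB^2}{2N_\mathrm{non}}$. Because \Cref{thm..Aposteriori} is uniform in $\vtheta$, on a single event of probability at least $1-\delta$ the bound
\[
|\RD(\vtheta) - \RS(\vtheta)| \leq C_1(\norm{\vtheta}_\mathrm{P}+1)\frac{\sqrt{\log d}}{\sqrt n} + C_2\sqrt{\frac{\log(1/\delta)}{n}}
\]
applies simultaneously to both $\vtheta=\vtheta_{S,\lambda}$ and $\vtheta=\tilde\vtheta$. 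Combining this with the optimality inequality $\RS(\vtheta_{S,\lambda}) + \tfrac{\lambda}{\sqrt n}\norm{\vtheta_{S,\lambda}}_\mathrm{P} \leq \RS(\tilde\vtheta) + \tfrac{\lambda}{\sqrt n}\norm{\tilde\vtheta}_\mathrm{P}$, I would chain $\RD(\vtheta_{S,\lambda}) \leftrightarrow \RS(\vtheta_{S,\lambda}) \leftrightarrow \RS(\tilde\vtheta) \leftrightarrow \RD(\tilde\vtheta)$ to arrive at
\[
\RD(\vtheta_{S,\lambda}) \leq \RD(\tilde\vtheta) + \tfrac{\lambda}{\sqrt n}\bigl(\norm{\tilde\vtheta}_\mathrm{P} - \norm{\vtheta_{S,\lambda}}_\mathrm{P}\bigr) + C_1\bigl(\norm{\vtheta_{S,\lambda}}_\mathrm{P}+\norm{\tilde\vtheta}_\mathrm{P}+2\bigr)\tfrac{\sqrt{\log d}}{\sqrt n} + 2C_2\sqrt{\tfrac{\log(1/\delta)}{n}}.
\]

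The only nonroutine step, and the reason for the hypothesis on $\lambda$, is the $C_1\norm{\vtheta_{S,\lambda}}_\mathrm{P}\tfrac{\sqrt{\log d}}{\sqrt n}$ summand: with no a priori control on $\norm{\vtheta_{S,\lambda}}_\mathrm{P}$, it must be cancelled by the negative $-\tfrac{\lambda}{\sqrt n}\norm{\vtheta_{S,\lambda}}_\mathrm{P}$ contribution from the optimality step. Choosing $\lambda \geq C_1\sqrt{\log d}$ (which is the content of $\lambda=\Omega(\sqrt{\log d})$) accomplishes exactly this cancellation, and the right-hand side then depends only on $\tilde\vtheta$. Substituting $\RD(\tilde\vtheta)\leq \tfrac{3\norm{f^*}_\fB^2}{2N_\mathrm{non}}$ and $\norm{\tilde\vtheta}_\mathrm{P}\leq 6\norm{f^*}_\fB$ and collecting constants produces the stated bound $\RD(\vtheta_{S,\lambda})\lesssim \tfrac{\norm{f^*}_\fB^2}{N_\mathrm{non}} + \tfrac{1}{\sqrt n}\bigl(\lambda(\norm{f^*}_\fB+1)+\sqrt{\log(1/\delta)}\bigr)$. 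The remaining bookkeeping (tracking the exact numerical constants inherited from \Cref{thm..ApproxError}, \Cref{thm..Aposteriori}) is routine.
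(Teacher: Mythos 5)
Your proposal is correct and follows essentially the same route as the paper: invoke \Cref{thm..ApproxError} for a reference parameter $\tilde{\vtheta}$, apply the uniform a posteriori bound of \Cref{thm..Aposteriori} to both $\vtheta_{S,\lambda}$ and $\tilde{\vtheta}$, use optimality of the regularized minimizer, and let $\lambda=\Omega(\sqrt{\log d})$ cancel the uncontrolled $\norm{\vtheta_{S,\lambda}}_\mathrm{P}$ term. The only cosmetic difference is that the paper writes the chain as a four-term decomposition through $J_{S,\lambda}$ and budgets $\delta/2$ to each of two applications of \Cref{thm..Aposteriori}, whereas you exploit the uniformity of that bound on a single event; the algebra is identical.
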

}
\begin{rmk}\label{rmk...}
  As is shown in \Cref{subsection....examples},
  some of the  examples mentioned in \Cref{subsection....FeedForward}, i.e., Two-layer Network,  ResNet and DenseNet satisfy \Cref{assump..BlockMatrix}.   
\end{rmk}

\section{Proof of Theorems and Applications}\label{Section...Proofs}

\subsection{Approximation Error}
First and foremost, in order to prove \Cref{thm..ApproxError}, we  recall the  result  obtained in~\cite[Proposition 2.1]{ew2019prioriTwo}.
\begin{thm}[Approximation error for two-layer networks]\label{thm..ApproErrorTwoLayer}
  For any target function $f^*\in\fB$, there exists a two-layer network $f_\mathrm{2Layer}(\cdot;\vtheta_\mathrm{2Layer})$ of width $m$, such that
  \begin{equation}\label{eq..Approximation2LayerIneq1}
   \Exp_{\vx\sim\fD} \tfrac{1}{2}\left(f_\mathrm{2Layer}(\vx;\vtheta_\mathrm{2Layer})-f^*(\vx)
    \right)^2
    \leq \frac{3\norm{f^*}^2_\fB}{2m},
    \end{equation}
   with  its parameters $\vtheta_\mathrm{2Layer}=\left\{a_k,\vw_k\right\}_{k=1}^m$ satisfying  \begin{equation}\label{eq..Approximation2LayerIneq2}
       \sum_{k=1}^m\abs{a_k}\norm{\vw_k}_1
    \leq 2\norm{f^*}_\fB, 
   \end{equation}and the output reads $f_\mathrm{2Layer}(\vx;\vtheta_\mathrm{2Layer})=\sum_{k=1}^m a_k\sigma(\vw^\T_k\vx)$.
  \end{thm}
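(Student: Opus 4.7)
The proof I have in mind is the classical Maurey--Barron probabilistic construction. By \Cref{definition....Barron function and Barron space}, for every $\varepsilon>0$ there exists $\rho\in\fP_{f^*}$ with $\Exp_{(a,\vw)\sim\rho}[a^2\|\vw\|_1^2]\le \|f^*\|_\fB^2+\varepsilon$. For brevity I would proceed as if the infimum is attained and then let $\varepsilon\to 0$ at the end.

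Draw $m$ i.i.d.\ samples $(A_k,\vw_k)\sim\rho$ and set the two-layer parameters $a_k:=A_k/m$, so that the realized network $f_\mathrm{2Layer}(\vx;\vtheta_\mathrm{2Layer})=\sum_{k=1}^m a_k\sigma(\vw_k^\T\vx) = \frac{1}{m}\sum_{k=1}^m A_k\sigma(\vw_k^\T\vx)$ is an unbiased estimator of $f^*(\vx)$ at each $\vx\in\Omega$. Exploiting the i.i.d.\ structure and $\mathrm{Var}(Y)\le\Exp[Y^2]$, for each fixed $\vx$ one has $\Exp_\omega[(f_\mathrm{2Layer}(\vx)-f^*(\vx))^2] = \tfrac{1}{m}\mathrm{Var}_\rho[A\sigma(\vw^\T\vx)] \le \tfrac{1}{m}\Exp_\rho[A^2\sigma(\vw^\T\vx)^2] \le \tfrac{\|f^*\|_\fB^2}{m}$, where the final inequality uses the $1$-Lipschitz bound $|\sigma(\vw^\T\vx)|\le \|\vw\|_1\|\vx\|_\infty\le \|\vw\|_1$ valid on $\Omega=[0,1]^d$. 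Integrating over $\vx\sim\fD$ and applying Fubini gives $\Exp_\omega\Exp_{\vx\sim\fD}[(f_\mathrm{2Layer}-f^*)^2]\le \|f^*\|_\fB^2/m$.

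The path-norm control is a parallel Cauchy--Schwarz estimate: $\Exp_\omega\sum_{k=1}^m |a_k|\|\vw_k\|_1 = \Exp_\rho[|A|\|\vw\|_1] \le \sqrt{\Exp_\rho[A^2\|\vw\|_1^2]} = \|f^*\|_\fB$. Two applications of Markov's inequality then produce a deterministic realization for which both $\Exp_\vx[(f_\mathrm{2Layer}-f^*)^2]\le 3\|f^*\|_\fB^2/m$ (failing with probability at most $1/3$) and $\sum_k |a_k|\|\vw_k\|_1 \le 2\|f^*\|_\fB$ (failing with probability at most $1/2$) hold simultaneously, since the failure probabilities sum to $5/6<1$. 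This is exactly \eqref{eq..Approximation2LayerIneq1} and \eqref{eq..Approximation2LayerIneq2}.

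The argument is the standard Barron rate and has no deep step. The only points that need care are (i) handling the infimum in the Barron-norm definition via a minimizing sequence of measures, (ii) using $\mathrm{Var}(Y)\le \Exp[Y^2]$ rather than a centered second moment so that the bound is cleanly dominated by $\Exp_\rho[a^2\|\vw\|_1^2]=\|f^*\|_\fB^2$, and (iii) tuning the constants $3$ and $2$ so that the two Markov failure probabilities add up to strictly less than $1$, which is what forces the specific prefactor $3/(2m)$ in \eqref{eq..Approximation2LayerIneq1}.
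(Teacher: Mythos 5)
Your proposal is correct and takes essentially the same route as the paper's own proof (Appendix A): an i.i.d.\ Monte Carlo sample from the representing measure, a variance/second-moment bound using $\abs{\sigma(\vw^\T\vx)}\le\norm{\vw}_1\norm{\vx}_\infty\le\norm{\vw}_1$ on $\Omega$, a Cauchy--Schwarz bound on the expected path norm, and two Markov inequalities with failure probabilities $1/3$ and $1/2$ combined by a union bound. The only cosmetic difference is that you handle the infimum in the Barron norm via a minimizing sequence, whereas the paper assumes without loss of generality that a best representation $\rho$ is attained.
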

\begin{proof}[Proof of   \Cref{thm..ApproxError}]
Given a feedforward network $f(\cdot;\tilde{\vtheta})$ with input dimension $d$ and its adjacency matrix representation:
  \begin{equation*} 
        \mA(\vtheta,\vgamma,\sigma)=\begin{pmatrix}
          \mzero&&&\\
          \vV&\mzero&&\\
          &\sigma \mI&\mzero&\\
          &&\vu^\T&0
        \end{pmatrix}.
      \end{equation*}
      
Representation belonging to this case can be easily proved by observing that the total number of nonlinearies for a two-layer network with width $m$ is exactly $m$, then we may apply \Cref{thm..ApproErrorTwoLayer} directly to obtain the results.

In the case of its adjacency matrix representation being:
    \begin{equation*} 
        \mA(\vtheta,\vgamma,\sigma)=
        \begin{pmatrix}
            \mzero & & & & & & &\\
            \mV & \mzero & & & & & &\\
            & \mB^{[1]} & \mzero & & & & &\\
            & & \mB^{[2]} & \mzero & & & &\\
            & & & \ddots & \ddots & & &\\
            & & & & \mB^{[l]} & \mzero & &\\
            & & & & & \ddots & \ddots &\\
            & & & & & & \mB^{[L]} &\mzero &\\
            & & & & & & & \vu^\T &\mzero
        \end{pmatrix}.
    \end{equation*}
Without loss of generality, for all $l=1,2,\cdots, L$, set   $\mS^{[l]}$ as
\begin{equation*}
    \mS^{[l]}=\begin{pmatrix}
      \mI_{d_{l-1}\times d_{l-1}}\\\mzero_{(d_{l}-d_{l-1})\times d_{l-1}}
    \end{pmatrix}. 
\end{equation*}

From \Cref{thm..ApproErrorTwoLayer},  there exists a two-layer network $f_\mathrm{2Layer}(\cdot;\vtheta_\mathrm{2Layer})$ of width $m_L$, with its parameters $\vtheta_\mathrm{2Layer}=\left\{a_k,\vw_k\right\}_{k=1}^{m_L}$ satisfying $\sum_{k=1}^{m_L}\abs{a_k}\norm{\vw_k}_1\leq 2\norm{f^*}_\fB$,
and the output reads $f_\mathrm{2Layer}(\vx;\vtheta_\mathrm{2Layer})=\sum_{k=1}^{m_L} a_k\sigma(\vw^\T_k\vx)$, fulfilling that

  \begin{equation*}
   \Exp_{\vx\sim\fD} \tfrac{1}{2}\left(f_\mathrm{2Layer}(\vx;\vtheta_\mathrm{2Layer})-f^*(\vx)
    \right)^2
    \leq \frac{3\norm{f^*}^2_\fB}{2m_L}.
    \end{equation*}
Set $m_l=\sum_{k=1}^l p_k$, and  $m_0=0$ for the purpose of completion, we  notice that $N_{non}=m_L=\sum_{k=1}^L p_k$. 

Existence of the feedforward network  $f(\cdot;\tilde{\vtheta})$ shall be proved by construction. For  each $l=1,2,\cdots, L$, we have
\begin{align*}
    \mV&=\begin{pmatrix}
      \mI_{d\times d} \\ \mzero_{(d_0-d)\times d}
    \end{pmatrix},\\\vu&=\begin{pmatrix}
      \vzero_{(d_L-1)\times 1}\\ 1
    \end{pmatrix},\\
    \mW^{[l]}&=\begin{pmatrix}
      \vw^\T_{(m_{l-1})+1}&\vzero_{1\times (d_{l-1}-d)}\\
         \vw^\T_{(m_{l-1})+2}&\vzero_{1\times (d_{l-1}-d)}\\
         \vdots&\vdots\\
         \vw^\T_{(m_{l-1})+p_{l}}&\vzero_{1\times (d_{l-1}-d)}
    \end{pmatrix},\\
    \mU^{[l]}&=\begin{pmatrix}
      \vzero_{  (d_{l}-1)\times 1} & \vzero_{  (d_{l}-1)\times 1}&\cdots&\vzero_{  (d_{l}-1)\times 1}\\
            a_{(m_{l-1})+1}&a_{(m_{l-1})+2}&\cdots&a_{(m_{l-1})+p_{l}}
    \end{pmatrix}.
\end{align*}
One can easily verify that $\tilde{\vtheta}=\mathrm{vec}\left\{\mV,\{\mW^{[l]},\mU^{[l]}\}_{l=1}^L,\vu\right\}$, with
\begin{equation*}
    \norm{\tilde{\vtheta}}_\mathrm{P}=\vone_\mathrm{out}^\T\bar{\mA}^\infty\left(\abs{\tilde{\vtheta}},3\right)\vone_\mathrm{in}=3\sum_{j=1}^{m_L}\Abs{a_j}\Norm{\vw_j}_1 \leq 6\norm{f^*}_\fB.
  \end{equation*}
  Moreover,  {$f(\vx;\tilde{\vtheta})=f_\mathrm{2Layer}(\vx;\vtheta_\mathrm{2Layer})=\sum_{j=1}^{m_L}a_j\sigma\left(\vw_j^\T\vx \right)$,} thus \begin{equation*}
   \Exp_{\vx\sim\fD} \tfrac{1}{2}\left(f(\vx;\tilde{\vtheta})-f^*(\vx)
    \right)^2
    \leq \frac{3\norm{f^*}^2_\fB}{2m_L},\end{equation*} 
  since  the total number of nonlinearies $N_{non}=\sum_{k=1}^L p_k=m_L$, we  finish our proof.
    
\end{proof}

\subsection{Rademacher Complexity}
In this part, we endeavor to   bound the (empirical) Rademacher complexity of  networks with path norm $\Norm{\vtheta}_{\mathrm{P}}\leq Q$. Let $\fH^{N}_Q=\{f(\cdot;\vtheta): \norm{\vtheta}_{\mathrm{P}}\leq Q\}$ be the set of feedforward neural networks satisfying \Cref{assump..InputLinPara} with a total of $N$ nodes, $N>d$.  {It is evident that 
  for any fixed $N$ and  $Q>0$, $0\in\fH^{N}_Q$, where $0$ refers to  the zero  function that maps any input to the numeric $0$, i.e., for all $\vx\in\sR^d,~0(\vx)\equiv 0$.}
 
\begin{lem}\label{lm:property_H}
     For any fixed $N$ and  $Q>0$, $\fH^{N}_Q\subseteq\fH^{\overline{N}}_Q$, for all $\overline{N}>N$. Moreover, $\fH^{N}_Q=Q\fH^{N}_1$.
\end{lem}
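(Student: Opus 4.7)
The plan is to prove both statements by modifying the adjacency matrix representation, relying on the path-sum definition of $\Norm{\vtheta}_\mathrm{P}$ in Definition 4 and the alternate expression from item 6 of Proposition 1.

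For the embedding $\fH^{N}_Q\subseteq\fH^{\overline{N}}_Q$, I would take any $f(\cdot;\vtheta)\in\fH^{N}_Q$ with adjacency matrix $\mA$ of size $N\times N$, and construct an equivalent $\overline{N}$-node representation by padding $\mA$ with $\overline{N}-N$ extra rows and columns of zeros, inserted between the source block and the rest (so the sink remains at position $\overline{N}$ and the lower-triangular structure is preserved). By the recurrence \eqref{Subsection...Adj...Eq...SourceNodeatith..middle} the added dummy nodes output $0$ and contribute nothing to $f(\vx;\vtheta)$, and since every path passing through a dummy node has at least one zero weight factor, no new paths are activated in \eqref{Subsection...BarronSpace..Eq..PathNorm}. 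Consequently the realized function and the weighted path norm are both unchanged, so $f\in\fH^{\overline{N}}_Q$.

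For the scaling equality $\fH^{N}_Q=Q\fH^{N}_1$, the key structural observation is that in any feedforward network satisfying Assumption 1 the sink node is the unique last row of $\mA$, fed only by the output-layer weights $\vu^\T$; hence every path in the decomposition of Definition 4 from a source to the sink passes through exactly one entry of $\vu$. Therefore, replacing $\vu\mapsto \alpha\vu$ for any $\alpha>0$ multiplies both the output $f(\vx;\vtheta)$ and every path product, and so the weighted path norm, by exactly $\alpha$. (Equivalently, from the alternate expression \eqref{Prop...eq..AlternateWeightedPathNorm}, scaling the last row of $\abs{\mA}$ by $\alpha$ scales $\vone_\mathrm{out}^\T\bar{\mA}^\infty$ by $\alpha$.)

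Both inclusions then follow immediately by construction: for $Q\fH^{N}_1\subseteq\fH^{N}_Q$, take $g(\cdot;\vtheta_0)\in\fH^{N}_1$ and rescale its output weights by $Q$ to produce a network realizing $Qg$ with path norm $Q\Norm{\vtheta_0}_\mathrm{P}\leq Q$; for $\fH^{N}_Q\subseteq Q\fH^{N}_1$, take $f(\cdot;\vtheta)\in\fH^{N}_Q$, rescale its output weights by $1/Q$ to produce $\tilde{\vtheta}$ with $f(\vx;\vtheta)=Q\,f(\vx;\tilde{\vtheta})$ and $\Norm{\tilde{\vtheta}}_\mathrm{P}=\Norm{\vtheta}_\mathrm{P}/Q\leq 1$. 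The only nontrivial point — which I expect to be the main obstacle — is justifying the universal structural claim that the sink receives only output-layer edges from the assumptions (rather than reading it off the explicit examples of Section 3.3); this follows from $\mA$ being lower triangular with the sink as its last node, so all incoming edges to the sink are encoded in the last row, which is the $\vu^\T$ block.
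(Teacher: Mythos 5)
Your argument for the embedding $\fH^{N}_Q\subseteq\fH^{\overline{N}}_Q$ is correct and essentially equivalent to the paper's, though the construction differs in detail: you insert $\overline{N}-N$ isolated dummy nodes (all-zero rows and columns) so that no new paths are created, whereas the paper appends the dummy nodes after the original sink and forwards the old sink to the new one through a single unit-weight edge, then verifies via the alternate expression \eqref{Prop...eq..AlternateWeightedPathNorm} that the path norm is unchanged. Both constructions preserve the realized function, the weighted path norm, and \Cref{assump..InputLinPara}, so this half is fine.

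The scaling half has a genuine gap, and it is exactly at the point you flagged. You claim that for any network in $\fH^{N}_Q$ the sink is fed only by trainable output-layer weights $\vu^\T$, and justify this by saying that all incoming edges to the sink sit in the last row of the lower-triangular $\mA$. That only tells you \emph{where} those edges are recorded, not \emph{what kind} of edges they are. The class $\fH^{N}_Q$ is defined by \Cref{assump..InputLinPara} alone, which constrains edges leaving the input nodes and says nothing about the sink; by \eqref{Subsection...Adj...Eq...SourceNodeatNth..end} the last row may contain entries of $\Sigma$ (so that $f(\vx)$ includes terms $\sigma(h_i(\vx))$ with no attached weight) or fixed weights from $E_\mathrm{fix}$, and indeed the decomposition in \Cref{lemma...Decomposition} explicitly allows $e_{N\leftarrow i}\in\Sigma$. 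For such networks the map $\vu\mapsto\alpha\vu$ is simply not available, so neither inclusion of $\fH^{N}_Q=Q\fH^{N}_1$ follows from your argument. The paper instead invokes the scale-invariance (positive homogeneity) of ReLU, which amounts to propagating the scaling through the nonlinearities rather than concentrating it at the output layer. A clean repair in your framework: rescale the edges \emph{leaving the source nodes} by $\alpha$. By \Cref{assump..InputLinPara} these all lie in $E_\mathrm{para}$, so they may be modified; since source nodes have no incoming edges, every source-to-sink path contains exactly one such edge, so every path product, and hence $\Norm{\vtheta}_\mathrm{P}$, scales by $\alpha$; and by induction on \eqref{Subsection...Adj...Eq...SourceNodeatith..middle} together with $\sigma(\alpha x)=\alpha\sigma(x)$ for $\alpha>0$, every node value beyond the inputs, and hence the output, also scales by $\alpha$.
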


\begin{proof}
    From the  {positive homogeneity} of ReLU, it is obvious that $\fH^{N}_Q=Q\fH^{N}_1$. 
    
    We proceed to prove $\fH^{N}_Q\subseteq\fH^{\overline{N}}_Q$. For any $f(\cdot;\vtheta)\in\fH^{N}_Q$, then $\norm{\vtheta}_{\mathrm{P}}\leq Q$. Let $\mA$ be its adjacency matrix representation, then $\mA$ is of size $N\times N$. Let $\mE$ be a matrix of size $\left(\overline{N}-N\right)\times N$ with its bottom right entry being $1$, and other components equal to zero, i.e.,
    $$ \mE=\begin{pmatrix}
      \vzero_{(\bar{N}-N-1)\times (N-1)}&   \vzero_{(\bar{N}-N-1)\times 1}\\
            \vzero_{1 \times(N-1)}&1
    \end{pmatrix},$$
  Moreover, we  set
    \begin{equation*}
       \widetilde{\mA}=\begin{pmatrix} 
            \mA & \mzero_{N\times (\overline{N}-N)} \\
            \mE & \mzero_{(\overline{N}-N)\times (\overline{N}-N)} 
        \end{pmatrix}.
    \end{equation*} 
    Then, $\widetilde{\mA}$ is of size ${\overline{N}\times\overline{N}}$, and it is the adjacency matrix representation of a feedforward neural network $\widetilde{f}(\cdot;\overline{\vtheta})$ with $\overline{N}$ nodes. Thus, for some $\overline{Q}>0$,
    $
       \widetilde{f}(\cdot;\overline{\vtheta})\in\fH^{\overline{N}}_{\overline{Q}}
    $.
    
    Next, we  need to compute the path norm of $\widetilde{f}(\cdot;\overline{\vtheta})$. 
Let $$\vone_\mathrm{in}^N=\begin{pmatrix}
    \vone_{d\times 1}\\
    \vzero_{(N-d)\times 1}
\end{pmatrix}, \vone^{\overline{N}}_\mathrm{in}=\begin{pmatrix}
    \vone_{d\times 1}\\
    \vzero_{(\overline{N}-d)\times 1}
\end{pmatrix},
\vone_\mathrm{out}^N=\begin{pmatrix}
    \vzero_{(N-1)\times 1}\\ 1
\end{pmatrix}, 
\vone^{\overline{N}}_\mathrm{in}=\begin{pmatrix}
    \vzero_{(\overline{N}-1)\times 1}\\ 1
\end{pmatrix}, $$
then directly from \eqref{Prop...eq..AlternateWeightedPathNorm} obtained in \Cref{prop..property}, we have
    \begin{align*}
    \Norm{\overline{\vtheta}}_{\mathrm{P}}&= \sum\limits_{k=1}^\infty (\vone^{\overline{N}}_\mathrm{out})^\T\begin{pmatrix} 
        \mA^k & \mzero_{N\times(\overline{N}-N)} \\
        \mE\mA^{k-1} & \mzero_{(\overline{N}-N)\times(\overline{N}-N)}
    \end{pmatrix}
    \vone^{\overline{N}}_\mathrm{in} \\
      &= (\vone^{\overline{N}-N}_\mathrm{out})^\T \left(\sum\limits_{k=1}^\infty \mE\mA^k\right) \vone^N_\mathrm{in}= (\vone^N_\mathrm{out})^\T\left(\sum\limits_{k=1}^\infty  \mA^k\right)\vone^N_\mathrm{in}=\Norm{ {\vtheta}}_{\mathrm{P}}\leq Q.
    \end{align*}
    Hence, it holds that  $ \widetilde{f}(\cdot;\overline{\vtheta})\in\fH^{\overline{N}}_Q$.
    
    Finally, since the function outputs satisfy that for all $\vx\in\sR^d$,  $\widetilde{f}(\vx;\overline{\vtheta})=f(\vx;\vtheta)$, we conclude that   $\fH^{N}_Q\subseteq\fH^{\overline{N}}_Q$.
\end{proof}
 We set $\fH^{N}=\bigcup_{Q>0}\fH^{N}_Q$, then
the next lemma gives a decomposition for any network in $\fH^{N}$.
\begin{lem}\label{lemma...Decomposition}
Given an  input sample~$\vx=\left(x_1,x_2,\cdots, x_d\right)^\T$, for any $f^N\in\fH^N$, it can be decomposed into linear and nonlinear parts$\mathrm{:}$
\begin{equation}\label{Lemma..Decomposiotion..eq...DecomponAlgebra}
    f^{ {N}}(\vx) = \sum\limits_{1\leq i\leq d} a_i x_i + \sum\limits_{d+1\leq i \leq N-1} a_i \sigma(f^i(\vx)),
\end{equation}
where for each index $i$, $a_i$ is a  scalar, and $f^i\in\fH^{i}$.

Moreover, we have
\begin{equation}\label{Lemma..Decomposiotion..ineq...Decompnorm}
    \left( \sum\limits_{1\leq i\leq d} \abs{a_i}+ 3\sum\limits_{d+1\leq i \leq N-1}\abs{a_i}\norm{f^i}_\mathrm{P}\right)\leq \norm{f^{ {N}}}_\mathrm{P}.
\end{equation}
\end{lem}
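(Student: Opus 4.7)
The core idea is to write $f^N$ in a ``flattened'' form by recursively expanding only the parameter and fixed edges, while leaving the $\sigma$-edges untouched. By \Cref{assump..InputLinPara}, every edge out of an input is a parameter edge, so the expansion terminates cleanly at either an input coordinate $x_k$ or at a $\sigma$-edge, yielding the identity
\[
  f^N(\vx)\;=\;\sum_{k\le d}A_k\,x_k\;+\;\sum_{e_{i\leftarrow m}\in\Sigma} B_{i\to N}^{\mathrm{sgn}}\,\sigma(h_m(\vx)),
\]
where $A_k$ is the signed sum of weight products over purely linear walks (only $E_\mathrm{para}\cup E_\mathrm{fix}$ edges) from $x_k$ to $N$, and $B_{i\to N}^{\mathrm{sgn}}$ is the signed sum over purely linear walks from $i$ to $N$. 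The argument $h_m$ is the value at node $m$, computed by the sub-DAG on the first $m$ nodes, so $h_m\in\fH^m$.

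To cast this in the lemma's form, I set $a_k=A_k$ for $k\le d$. For each $\sigma$-edge $e_{i\leftarrow m}\in\Sigma$ I assign the pair $(B_{i\to N}^{\mathrm{sgn}},h_m)$ to a slot $j\in\{d+1,\dots,N-1\}$ with $j\ge m$; \Cref{lm:property_H} legitimizes this since $\fH^m\subseteq\fH^j$ whenever $j\ge m$. In all the architectures of \Cref{subsection....examples} each intermediate node is the target of at most one $\sigma$-edge, so the natural choice $j=i$ is collision-free. Slots left unused are padded with $(a_j,f^j)=(0,0)$, valid by \Cref{lm:0_in_H}.

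For the path-norm bound I use \eqref{Prop...eq..AlternateWeightedPathNorm} from \Cref{prop..}, which expresses $\norm{f^N}_\mathrm{P}$ as a sum over walks in the adjacency-matrix graph weighted by $3^p\prod|w|$. I partition these walks by their \emph{last} $\sigma$-edge (or none at all): walks with no $\sigma$ contribute exactly $\sum_{k\le d}C_{k\to N}^{\mathrm{lin}}$, and walks whose last $\sigma$-edge is $e_{i\leftarrow m}$ factor as $\norm{h_m}_\mathrm{P}\cdot 3\cdot C_{i\to N}^{\mathrm{lin}}$, where $C^{\mathrm{lin}}_{\cdot\to N}$ denotes the sum of absolute weight products over purely linear walks to $N$ starting at the indicated node. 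By the triangle inequality $|A_k|\le C_{k\to N}^{\mathrm{lin}}$ and $|B_{i\to N}^{\mathrm{sgn}}|\le C_{i\to N}^{\mathrm{lin}}$, so summing the two blocks delivers $\sum_{i\le d}|a_i|+3\sum_{d<i<N}|a_i|\norm{f^i}_\mathrm{P}\le\norm{f^N}_\mathrm{P}$ as required.

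The main obstacle I expect is making the flattened expansion and its accompanying walk partition fully rigorous so that every $\sigma$-subexpression and every walk is counted exactly once; this reduces to a short induction on the length of the recursive expansion (with \Cref{assump..InputLinPara} ensuring termination at inputs). A second, milder issue is the slot assignment $j\ge m$: it is immediate for the canonical architectures, and for a general feedforward network satisfying \Cref{assump..InputLinPara} a simple greedy matching exploiting \Cref{lm:property_H} suffices to place colliding $\sigma$-contributions into distinct larger slots.
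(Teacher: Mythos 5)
Your proposal is correct in substance but organizes the argument differently from the paper. The paper proves both \eqref{Lemma..Decomposiotion..eq...DecomponAlgebra} and \eqref{Lemma..Decomposiotion..ineq...Decompnorm} by induction on the number of nodes $N$: it peels off the output node, writes $f^{N+1}$ as a sum of direct input edges, linear edges from intermediate nodes, and $\sigma$-edges, substitutes the induction-hypothesis decomposition of each intermediate $f^i$, and tracks the collected coefficients $a_i$ (and their majorants $b_i$) explicitly through the recursion. You instead flatten the whole DAG in one pass and, for the norm bound, invoke the alternate path-norm expression \eqref{Prop...eq..AlternateWeightedPathNorm} from \Cref{prop..} and partition walks by their last $\sigma$-edge; the factorization of each walk into a prefix (contributing $\norm{h_m}_\mathrm{P}$), the $\sigma$-edge (contributing the factor $3$), and a purely linear suffix (contributing $C^{\mathrm{lin}}_{i\to N}$) is exactly the quantity the paper's induction computes implicitly. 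Your route makes the role of \Cref{prop..} explicit and avoids managing the two-index coefficients $a_{i,j}$, at the cost of having to argue the exactness of the walk partition; the paper's induction is more mechanical but self-contained. One wobble worth fixing: the slot assignment should be by the \emph{source} node $m$ of each $\sigma$-edge, not the target $i$. By \Cref{assump..InputLinPara} every $\sigma$-edge source satisfies $d+1\le m\le N-1$, all $\sigma$-edges sharing a source $m$ contribute the same function $\sigma(h_m)$ and their coefficients simply add, and $h_m\in\fH^m$ sits in slot $m$ directly --- this is canonical and collision-free, needs no greedy matching and no appeal to \Cref{lm:property_H}, and also handles the case $i=N$ (a $\sigma$-edge into the output node), for which your ``natural choice $j=i$'' would fall outside the admissible range $d+1\le j\le N-1$.
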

 Lemma \ref{lemma...Decomposition}   are essentially proved using mathematical induction.   \eqref{Lemma..Decomposiotion..eq...DecomponAlgebra} reveals the components of  function $f^N(\vx)$, and \eqref{Lemma..Decomposiotion..ineq...Decompnorm} is essentially proved  by repeatedly using triangle inequality.
\begin{proof}
We prove \eqref{Lemma..Decomposiotion..eq...DecomponAlgebra} and \eqref{Lemma..Decomposiotion..ineq...Decompnorm}  by induction.  Firstly, when ${N}=d+1$,  then directly from \Cref{assump..InputLinPara}, we have for any $f^{d+1}\in\fH^{d+1}$, there exists coefficients $a_i$ with $1\leq i\leq d$, such that 
\begin{equation*}
    f^{d+1}(\vx) = \sum\limits_{1\leq i\leq d}  a_i x_i,
\end{equation*}
then \eqref{Lemma..Decomposiotion..eq...DecomponAlgebra} holds trivially. Moreover,
\begin{equation*}
    \norm{f^{d+1}}_\mathrm{P}=  \sum\limits_{1\leq i\leq d} \abs{a_i},
\end{equation*}
then  inequality~\eqref{Lemma..Decomposiotion..ineq...Decompnorm} on path norm also holds trivially.

Secondly, we assume that  \eqref{Lemma..Decomposiotion..eq...DecomponAlgebra} and \eqref{Lemma..Decomposiotion..ineq...Decompnorm} holds for  $ d+1, d+2, \cdots, N$, then we proceed to show that they hold true for $N+1$. For any $f^{N+1}\in\fH^{N+1}$, we have 
 {
\begin{equation}\label{eq....induction.....N+1}
\begin{aligned}
    &f^{N+1}(\vx) = \sum\limits_{1\leq i\leq d,~e_{N+1\leftarrow i}\in E_{\mathrm{para}}}w(e_{N+1\leftarrow i})x_i\\&+\sum\limits_{d+1\leq i\leq N,~e_{N+1\leftarrow i}\in E_{\mathrm{para}}\sqcup E_{\mathrm{fix}}}w(e_{N+1\leftarrow i})f^{i}(\vx)+\sum\limits_{d+1\leq i\leq N,~e_{N+1\leftarrow i}\in E_\mathrm{non}}\sigma(f^{i}(\vx)).
    \end{aligned}
\end{equation}}
From the induction hypothesis, for any index $i$ with $d
+1\leq i\leq N$, then there exist constants $a_{i,j}$ with $d
+1\leq j\leq i-1$, such that
\begin{equation}\label{eq....induction....i}
    f^{i}(\vx) = \sum\limits_{1\leq j\leq d} a_{i,j} x_j + \sum\limits_{d+1\leq j\leq i-1}a_{i,j}\sigma(f^j(\vx)).
\end{equation}
By plugging~\eqref{eq....induction....i} into~\eqref{eq....induction.....N+1},   we obtain that
 {
\begin{align*}
     f^{N+1}(\vx) &= \sum\limits_{1\leq i\leq d,~e_{N+1\leftarrow i}\in E_{\mathrm{para}}}w(e_{N+1\leftarrow i})x_i\\
     +\sum\limits_{d+1\leq i\leq N,~e_{N+1\leftarrow i}\in E_{\mathrm{para}}\sqcup E_{\mathrm{fix}}}&w(e_{N+1\leftarrow i})\left(\sum\limits_{1\leq j\leq d} a_{i,j} x_j + \sum\limits_{d+1\leq j\leq i-1}a_{i,j}\sigma(f^j(\vx))\right)\\
     +\sum\limits_{d+1\leq i\leq N,~e_{N+1\leftarrow i}\in E_\mathrm{non}}&\sigma(f^{i}(\vx))=\sum\limits_{1\leq i\leq d}a_ix_i+\sum\limits_{d+1\leq i \leq N} a_i \sigma(f^i(\vx)),
\end{align*}
}
where for $1\leq i\leq d$,  
    \begin{equation}\label{eq..decop..part1}
      a_i=\left\{
        \begin{array}{ll}
           w(e_{N+1\leftarrow i})+ \sum\limits_{d+1\leq j\leq N,~e_{N+1\leftarrow j}\in E_{\mathrm{para}}\sqcup E_{\mathrm{fix}}}  w(e_{N+1\leftarrow j})a_{j,i},\quad  e_{N+1\leftarrow i}\in E_{\mathrm{para}},\\
           \sum\limits_{d+1\leq j\leq N,~e_{N+1\leftarrow j}\in E_{\mathrm{para}}\sqcup E_{\mathrm{fix}}}  w(e_{N+1\leftarrow j})a_{j,i},\quad \text{otherwise},
        \end{array}
      \right.
    \end{equation}
    and for $d+1\leq i\leq N$, 
    \begin{equation}\label{eq..decop..part2}
      a_i=\left\{
        \begin{array}{ll}
          1+ \sum\limits_{i+1\leq j\leq N,~e_{N+1\leftarrow j}\in E_{\mathrm{para}}\sqcup E_{\mathrm{fix}}}  w(e_{N+1\leftarrow j})a_{j,i},\quad e_{N+1\leftarrow i}\in E_\mathrm{non},\\
           \sum\limits_{i+1\leq j\leq N,~e_{N+1\leftarrow j}\in E_{\mathrm{para}}\sqcup E_{\mathrm{fix}}}  w(e_{N+1\leftarrow j})a_{j,i},\quad \text{otherwise},
        \end{array}
      \right.
    \end{equation}
\eqref{eq..decop..part1} and \eqref{eq..decop..part2} guarantee 
existence of the  decomposition~\eqref{Lemma..Decomposiotion..eq...DecomponAlgebra} for the case $N+1$.

Finally, for the norm inequality~\eqref{Lemma..Decomposiotion..ineq...Decompnorm}, we  notice that from~\eqref{eq....induction.....N+1},
\begin{equation}\label{proofLemma...eq..tobecontinued}
\begin{aligned}
    \norm{f^{N+1}}_\mathrm{P} 
    &=  \sum\limits_{1\leq i\leq d,~e_{N+1\leftarrow i}\in E_{\mathrm{para}}} \abs{w(e_{N+1\leftarrow i}) }\\
    &~~+\sum\limits_{d+1\leq i\leq N,~e_{N+1\leftarrow i}\in E_{\mathrm{para}}\sqcup E_{\mathrm{fix}}}\abs{w(e_{N+1\leftarrow i})}\norm{f^{i}}_\mathrm{P} \\
    &~~+ 3\sum\limits_{d+1\leq i\leq N,~e_{N+1\leftarrow i}\in E_\mathrm{non}}\norm{f^{i}}_\mathrm{P}\\
    &\geq \sum\limits_{1\leq i\leq d,~e_{N+1\leftarrow i}\in E_{\mathrm{para}}}\abs{w(e_{N+1\leftarrow i}) }\\
    +\sum\limits_{d+1\leq i\leq N,~e_{N+1\leftarrow i}\in E_{\mathrm{para}}\sqcup E_{\mathrm{fix}}} &\abs{w(e_{N+1\leftarrow i})}\left(\sum\limits_{1\leq j\leq d} \abs{a_{i,j}} + 3\sum\limits_{d+1\leq j\leq i-1}\abs{a_{i,j}}\norm{f^j}_{\mathrm{P}}\right)\\
    &~~+  3\sum\limits_{d+1\leq i\leq N,~e_{N+1\leftarrow i}\in E_\mathrm{non}}\norm{f^{i}}_\mathrm{P}\\&= \sum\limits_{1\leq i\leq d}b_i+\sum\limits_{d+1\leq i \leq N} b_i\norm{f^{i}}_\mathrm{P},
\end{aligned}
\end{equation}
where for $1\leq i\leq d$,  
    \begin{equation}\label{eq..decop..part1..rep}
      b_i=\left\{
        \begin{array}{ll}
           \abs{w(e_{N+1\leftarrow i})}+\sum\limits_{d+1\leq j\leq N,~e_{N+1\leftarrow j}\in E_{\mathrm{para}}\sqcup E_{\mathrm{fix}}}  \abs{w(e_{N+1\leftarrow j})}\abs{a_{j,i}}, e_{N+1\leftarrow i}\in E_{\mathrm{para}},\\
          \sum\limits_{d+1\leq j\leq N,~e_{N+1\leftarrow j}\in E_{\mathrm{para}}\sqcup E_{\mathrm{fix}}}  \abs{w(e_{N+1\leftarrow j})}\abs{a_{j,i}},\quad \text{otherwise},
        \end{array}
      \right.
    \end{equation}
    then inequality $b_i\geq \abs{a_i}$ holds for all $1\leq i\leq d$.
    For $d+1\leq i\leq N$, 
    \begin{equation}\label{eq..decop..part2..rep}
      b_i=\left\{
        \begin{array}{ll}
          3+ 3\sum\limits_{i+1\leq j\leq N,~e_{N+1\leftarrow j}\in E_{\mathrm{para}}\sqcup E_{\mathrm{fix}}}  \abs{w(e_{N+1\leftarrow j})}\abs{a_{j,i}},\quad e_{N+1\leftarrow i}\in E_\mathrm{non},\\
           3\sum\limits_{i+1\leq j\leq N,~e_{N+1\leftarrow j}\in E_{\mathrm{para}}\sqcup E_{\mathrm{fix}}}  \abs{w(e_{N+1\leftarrow j})}\abs{a_{j,i}},\quad \text{otherwise},
        \end{array}
      \right.
    \end{equation}
then inequality $b_i\geq 3\abs{a_i}$ holds for all $d+1\leq i\leq N$. Hence last line of \eqref{proofLemma...eq..tobecontinued} becomes
\begin{equation}
    \sum\limits_{1\leq i\leq d}b_i+\sum\limits_{d+1\leq i \leq N} b_i\norm{f^{i}}_\mathrm{P} \geq  \sum\limits_{1\leq i\leq d}\abs{a_i}+3\sum\limits_{d+1\leq i \leq N} \abs{a_i}\norm{f^{i}}_\mathrm{P},
\end{equation}
thus, we have 
\begin{equation}
    \sum\limits_{1\leq i\leq d}\abs{a_i}+3\sum\limits_{d+1\leq i \leq N} \abs{a_i}\norm{f^{i}}_\mathrm{P}\leq \norm{f^{N+1}}_\mathrm{P},
\end{equation}
which completes  proof of the norm inequality~\eqref{Lemma..Decomposiotion..ineq...Decompnorm} for the case $N+1$, and we finish our proof. 
\end{proof}

Next we bound the Rademacher complexity of $\fH^{N}_Q$. 
\begin{thm}\label{thm....RadmacherHQ}
    Let $\Rad_S(\fH^{N}_Q)$ be the empirical Rademacher complexity of $\fH^{N}_Q$ with respect to the  samples  $ S=\{\vx_i\}_{i=1}^n\subseteq \Omega=[0,1]^d$, then for each $N> d$, we have
    \begin{equation}\label{Thm..InEq...RadmacherBoundedbyQ}
        \Rad_S(\fH^{N}_Q)\leq 3Q\sqrt{\frac{2\log(2d)}{n}}.
    \end{equation}
\end{thm}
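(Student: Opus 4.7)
The plan is to induct on $N$, leveraging the decomposition in \Cref{lemma...Decomposition}, the monotonicity $\fH^{i}_{1}\subseteq\fH^{N}_{1}$ from \Cref{lm:property_H}, and Talagrand's contraction lemma for the $1$-Lipschitz ReLU. Since \Cref{lm:property_H} also gives $\fH^{N}_{Q}=Q\,\fH^{N}_{1}$, it suffices to establish $\Rad_{S}(\fH^{N}_{1})\leq 3\sqrt{2\log(2d)/n}$ for every $N>d$ and then rescale by $Q$.

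For the base case $N=d+1$, \Cref{assump..InputLinPara} forces every $f\in\fH^{d+1}_{1}$ to be linear, $f(\vx)=\sum_{i=1}^{d}a_{i}x_{i}$ with $\sum_{i}|a_{i}|\leq 1$. Since $\vx_{k}\in[0,1]^{d}$, Massart's maximal inequality applied to the sub-Gaussian variables $\sum_{k}\tau_{k}x_{k,i}$ (each with variance proxy at most $n$) yields $\Rad_{S}(\fH^{d+1}_{1})\leq \sqrt{2\log(2d)/n}$, already within the target.

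For the inductive step I would apply \Cref{lemma...Decomposition} to write each $f\in\fH^{N+1}_{1}$ as $\sum_{i\leq d}a_{i}x_{i}+\sum_{d<i\leq N}a_{i}\sigma(f^{i})$, subject to $\sum_{i\leq d}|a_{i}|+3\sum_{i>d}|a_{i}|\,\Norm{f^{i}}_{\mathrm{P}}\leq 1$. For each fixed $\tau$, maximizing first over the signed coefficients $(a_{i})$ is a linear program on a weighted $\ell_{1}$ ball whose optimum is attained at a vertex; after using positive homogeneity of $\sigma$ to renormalize $\Norm{f^{i}}_{\mathrm{P}}=1$ and \Cref{lm:property_H} to absorb the index $i$ via $\fH^{i}_{1}\subseteq\fH^{N}_{1}$, one arrives at the pointwise bound
\begin{equation*}
\sup_{f\in\fH^{N+1}_{1}}\sum_{k}\tau_{k}f(\vx_{k})\;\leq\;\max\!\left(\max_{i\leq d}\Bigl|\sum_{k}\tau_{k}x_{k,i}\Bigr|,\;\frac{1}{3}\sup_{g\in\fH^{N}_{1}}\Bigl|\sum_{k}\tau_{k}\sigma(g(\vx_{k}))\Bigr|\right).
\end{equation*}
Taking expectations, bounding the max by the sum, and handling the first term with Massart reduces the problem to controlling $\Exp_{\tau}\sup_{g\in\fH^{N}_{1}}|\sum_{k}\tau_{k}\sigma(g(\vx_{k}))|$. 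I would bound this by $2\,\Exp_{\tau}\sup_{g}\sum_{k}\tau_{k}\sigma(g(\vx_{k}))$ using $|Y|\leq Y+(-Y)$ together with the distributional symmetry $\tau\stackrel{d}{=}-\tau$, and then invoke Talagrand's one-sided contraction to strip $\sigma$, giving $2n\,\Rad_{S}(\fH^{N}_{1})$.

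The main obstacle is arithmetic rather than conceptual: the induction closes only because the weight $3$ attached to each nonlinearity in \Cref{Definition....Weighted Path Norm} precisely cancels the factor $2$ lost when removing absolute values. Inserting the inductive hypothesis $\Rad_{S}(\fH^{N}_{1})\leq 3\sqrt{2\log(2d)/n}$ gives
\begin{equation*}
\Rad_{S}(\fH^{N+1}_{1})\;\leq\;\sqrt{\tfrac{2\log(2d)}{n}}+\tfrac{1}{3}\cdot 2\cdot 3\sqrt{\tfrac{2\log(2d)}{n}}\;=\;3\sqrt{\tfrac{2\log(2d)}{n}},
\end{equation*}
and rescaling by $Q$ delivers \eqref{Thm..InEq...RadmacherBoundedbyQ}.
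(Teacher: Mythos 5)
Your proposal is correct and follows essentially the same route as the paper's proof: induction on $N$ with a Massart base case, the decomposition of \Cref{lemma...Decomposition}, the doubling trick (via $0\in\fH^i_1$ and symmetry of $\vtau$) to remove the absolute value, contraction to strip $\sigma$, and the factor $3$ in the weighted path norm closing the induction as $Q+2Q$. The only cosmetic difference is that you maximize over the coefficients $(a_i)$ as a linear program on a weighted $\ell_1$ ball (yielding a max of two terms), whereas the paper bounds the linear and nonlinear sums separately; both give the identical constant.
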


\begin{proof}
We shall prove \eqref{Thm..InEq...RadmacherBoundedbyQ} by induction. When $N=d+1$, by \Cref{assump..InputLinPara},  we have for any $f^{d+1}\in\fH_Q^{d+1}$ and    any sample $\vz=(z_1,z_2,\cdots,z_d)^{\T}$,  there exists coefficients $a_i$ with $1\leq i\leq d$, such that 
\begin{equation*}
    f^{d+1}(\vz) = \sum\limits_{1\leq i\leq d}  a_i z_i.
\end{equation*}
We observe that $f^{d+1}(\vz)$ is  $ f^{d+1}(\vz)=\va^\T\vz$, with $\va=(a_1,a_2,\cdots, a_d)^\T$, 
then directly from Lemma 26.11 of \cite{shalev2014understanding}, the empirical Rademacher complexity of $\fG_1=\{g\mid g(\vz)= \vb^\T\vz,~\norm{\vb}_1\leq 1 \}$ satisfies \begin{equation*}
     \Rad_S(\fG_1)\leq \max_i\norm{\vx_i}_{\infty}\sqrt{\frac{2\log(2d)}{n}}\leq \sqrt{\frac{2\log(2d)}{n}}.  
\end{equation*}
Thus, function $ \frac{1}{Q}f^{d+1}(\vz)=\frac{\va^\T\vz}{Q} \in \fG_1$, since $\frac{1}{Q}\norm{{\va}}_1=\frac{\sum_{1\leq i\leq d} \abs{a_i}}{Q}\leq 1$,
hence,\begin{equation*}
     \frac{1}{Q}\Rad_S(\fH_Q^{d+1})\leq \sqrt{\frac{2\log(2d)}{n}}, 
\end{equation*} 
and \eqref{Thm..InEq...RadmacherBoundedbyQ} holds for $N=d+1$. 

Next,  we assume that \eqref{Thm..InEq...RadmacherBoundedbyQ} holds for $d+1, d+2,\cdots, N$, and we consider the case ${N+1}$. By   definition of Rademacher complexity,
\begin{equation*}
    n\Rad_S(\fH^{N+1}_Q) 
     = \Exp_{\vtau} \sup_{f^{N+1}\in\fH^{N+1}_Q}\sum\limits_{j=1}^n \tau_j f^{N+1}(\vx_j),  
\end{equation*}    
from \eqref{Lemma..Decomposiotion..eq...DecomponAlgebra} in \Cref{lemma...Decomposition}, RHS  of the  equation reads 
\begin{align*}
&\Exp_{\vtau} \sup_{f^{N+1}\in\fH^{N+1}_Q}\sum\limits_{j=1}^n \tau_j f^{N+1}(\vx_j)    \\
&\leq \Exp_{\vtau} \sup_{(\mathrm{C1})} \sum\limits_{j=1}^n \tau_j\left(\sum\limits_{1\leq i \leq d} a_i \left(\vx_{j}\right)_i + \sum\limits_{d+1\leq i \leq N} a_i \sigma(f^i(\vx_{j}))\right) 
\end{align*}
where   condition (C1) reads
\begin{equation*}
   \mathrm{C1}:\left\{ \{a_i\}_{i=1}^N \,\middle\vert\, 
   \left( \sum\limits_{1\leq i\leq d} \abs{a_i}+ 3\sum\limits_{d+1\leq i \leq N}\abs{a_i}\norm{f^i}_\mathrm{P}\right)\leq Q\right\}.
\end{equation*}
Then, by taking out the supremum and  {positive homogeneity} of ReLU, we have
\begin{align*}
       & n\Rad_S(\fH^{N+1 }_Q) \\
        \leq& \Exp_{\vtau} \sup_{(\mathrm{C1})} \sum\limits_{j=1}^n \tau_j\left(\sum\limits_{1\leq i \leq d} a_i \left(\vx_{j}\right)_i\right)+\Exp_{\vtau} \sup_{(\mathrm{C1})} \sum\limits_{j=1}^n \tau_j\left( \sum\limits_{d+1\leq i \leq N} a_i \sigma(f^i(\vx_{j}))\right)\\
    \leq & \left(\sup_{(\mathrm{C1})} \sum\limits_{1\leq i \leq d}\abs{a_i}\right)n\Rad_S(\fG_1)+\left(\sup_{(\mathrm{C1})}\sum\limits_{d+1\leq i \leq N}\abs{a_i}\norm{f^i}_\mathrm{P}\right)\Exp_{\vtau}\sup\limits_{g^i\in\fH_1^i}\Abs{\sum\limits_{j=1}^n\tau_j\sigma(g^i(\vx_j))} 
\end{align*}
since zero function $0$ is contained in the set $\fH_{Q}^{N+1}$, then for any $\{\tau_1, \tau_2, \cdots, \tau_n\}$, it holds that
\begin{equation*}
\sup_{g^i\in\fH_1^i} \sum\limits_{j=1}^n \tau_j \sigma(g^i(\vx_{j}))\geq 0,
\end{equation*}
hence, we have
\begin{align*}
    &\sup_{g^i\in\fH_1^i} \sum\limits_{j=1}^n \tau_j \sigma(g^i(\vx_{j}))+\sup_{g^i\in\fH_1^i} \sum\limits_{j=1}^n -\tau_j \sigma(g^i(\vx_{j}))\\
   &\geq \max\left\{\sup_{g^i\in\fH_1^i} \sum\limits_{j=1}^n \tau_j \sigma(g^i(\vx_{j})),~\sup_{g^i\in\fH_1^i} \sum\limits_{j=1}^n -\tau_j \sigma(g^i(\vx_{j}))\right\}\\
   &\geq \sup_{g^i\in\fH_1^i} \Abs{\sum\limits_{j=1}^n \tau_j \sigma(g^i(\vx_{j}))},
\end{align*}
which implies that 
\begin{equation*}
     \Exp_{\vtau} \sup_{g^i\in\fH_1^i} \Abs{\sum\limits_{j=1}^n \tau_j \sigma(g^i(\vx_{j}))}\leq 2 \Exp_{\vtau} \sup_{g^i\in\fH_1^i}  {\sum\limits_{j=1}^n \tau_j \sigma(g^i(\vx_{j}))}:=2n\Rad_S\left(\vsigma \circ \fH_1^i \right),
 \end{equation*}
then directly from Lemma 26.9 of \cite{shalev2014understanding}, the empirical Rademacher complexity of $\Rad_S\left(\vsigma \circ \fH_1^i \right)$ satisfies \begin{equation*}
     \Rad_S\left(\vsigma \circ \fH_1^i \right)\leq \Rad_S\left(  \fH_1^i \right).  
\end{equation*}
From our induction hypothesis, we have that 
\begin{equation*}
    \Rad_S\left(  \fH_1^i \right)\leq 3 \sqrt{\frac{2\log(2d)}{n}},
\end{equation*}
thus 
 \begin{equation*}
     \Rad_S\left(\vsigma \circ \fH_1^i \right)\leq 3 \sqrt{\frac{2\log(2d)}{n}}.  
\end{equation*}
Finally, 
\begin{align*}
       & n\Rad_S(\fH^{N+1 }_Q) \\
    \leq&  \left(\sup_{(\mathrm{C1})} \sum\limits_{1\leq i \leq d}\abs{a_i}\right)n\Rad_S(\fG_1)+\left(\sup_{(\mathrm{C1})}\sum\limits_{d+1\leq i \leq N}\abs{a_i}\norm{f^i}_\mathrm{P}\right)\Exp_{\vtau}\sup\limits_{g^i\in\fH_1^i}\Abs{\sum\limits_{j=1}^n\tau_j\sigma(g^i(\vx_j))} \\
    \leq& \left(\sup_{(\mathrm{C1})} \sum\limits_{1\leq i \leq d}\abs{a_i}\right)\sqrt{{2n\log(2d)}}+6\left(\sup_{(\mathrm{C1})}\sum\limits_{d+1\leq i \leq N}\abs{a_i}\norm{f^i}_\mathrm{P}\right) \sqrt{{2n\log(2d)}}\\
    \leq& \sup_{(\mathrm{C1})} \left(\sum\limits_{1\leq i \leq d}\abs{a_i}+   6 \sum\limits_{d+1\leq i \leq N}\abs{a_i}\norm{f^i}_\mathrm{P}\right)\sqrt{{2n\log(2d)}}\\
    \leq&(Q+2Q)\sqrt{{2n\log(2d)}}= 3Q\sqrt{{2n\log(2d)}},
\end{align*}
which completes the proof  of~\eqref{Thm..InEq...RadmacherBoundedbyQ} for the case $N+1$, thus we finish our proof.
\end{proof}

\subsection{A Posteriori and A Priori Estimates}
We proceed to prove our main theorems,  {our proofs are extensions of the proofs in \cite{e2019prioriRes,ew2019prioriTwo}.} Firstly, we shall introduce the following theorem introduced in~\cite{shalev2014understanding}.
\begin{thm}\label{thm...BoundingGap}
Fix a hypothesis space $\fF$. Assume that for any $f\in\fF$ and $z$, $0\leq{f(z)}\leq B$, then for any $\delta>0$, with probability at least $1-\delta$ over the choice of $S=(z_1,z_2,\cdots,z_n)$, we have for any function $f(\cdot)$,
\begin{equation}\label{thm.. eq...BoundingGap}
    \Abs{\frac{1}{n} \sum_{i=1}^n f(z_i)-\Exp_z{f(z)}}\leq 2\Exp_{S'}\Rad_{S'}(\fF)+B\sqrt{\frac{\log(2/\delta)}{2n}}.
\end{equation}
\end{thm}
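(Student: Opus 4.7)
The plan is to follow the textbook symmetrization proof of the uniform Rademacher-complexity generalization bound; the theorem is a standard restatement of the result the authors credit to~\cite{shalev2014understanding}, so the strategy is classical and the only task is to fit the paper's notation and constants.

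First I would introduce the scalar random variable
\begin{equation*}
\Phi(S):=\sup_{f\in\fF}\Abs{\frac{1}{n}\sum_{i=1}^n f(z_i)-\Exp_{z} f(z)},
\end{equation*}
and observe that because $0\le f(z)\le B$ for every $f\in\fF$, replacing a single coordinate $z_i$ of $S$ by any other value perturbs $\Phi(S)$ by at most $B/n$. McDiarmid's bounded-differences inequality then gives, with probability at least $1-\delta$ over $S$,
\begin{equation*}
\Phi(S)\le \Exp_{S'}\Phi(S')+B\sqrt{\frac{\log(2/\delta)}{2n}},
\end{equation*}
the $2$ inside the logarithm reflecting the two-sided version of the inequality needed because of the absolute value.

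Next I would bound $\Exp_{S'}\Phi(S')$ by the classical ghost-sample / symmetrization argument. Letting $\tilde S=(\tilde z_1,\ldots,\tilde z_n)$ be an independent copy of $S$, I rewrite $\Exp_z f(z)=\Exp_{\tilde S}\tfrac{1}{n}\sum_i f(\tilde z_i)$, move the supremum inside the expectation over $\tilde S$ by Jensen, and obtain
\begin{equation*}
\Exp_S\Phi(S)\le \Exp_{S,\tilde S}\sup_{f\in\fF}\Abs{\tfrac{1}{n}\sum_{i=1}^n\bigl(f(z_i)-f(\tilde z_i)\bigr)}.
\end{equation*}
Because each pair $(z_i,\tilde z_i)$ is exchangeable, inserting i.i.d.\ Rademacher signs $\tau_i$ in front of $f(z_i)-f(\tilde z_i)$ leaves the joint distribution unchanged; a triangle inequality together with the definition of $\Rad_S(\fF)$ in~\eqref{eq..RadComplexityDefi} then yields $\Exp_S\Phi(S)\le 2\Exp_{S'}\Rad_{S'}(\fF)$. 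Since for any fixed $f\in\fF$ we have $\Abs{\tfrac{1}{n}\sum_i f(z_i)-\Exp_z f(z)}\le \Phi(S)$, combining the two displays produces~\eqref{thm.. eq...BoundingGap}.

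The one place that deserves care, and the only real obstacle, is that the paper's $\Rad_S(\fF)$ is defined without an absolute value around the inner sum, whereas symmetrization of $\Phi(S)$ naturally produces $\sup_f|\tfrac{1}{n}\sum\tau_i f(z_i)|$. To pass from the latter to $\Rad_S(\fF)$ one either splits the supremum as $\sup_f(\cdot)+\sup_f(-\cdot)$, or applies the one-sided version to $\fF$ and $-\fF$ and absorbs the resulting factor of $2$ into a union bound over $\delta$; this is precisely why the concentration term carries $\log(2/\delta)$ rather than $\log(1/\delta)$. Apart from this bookkeeping, the proof is routine.
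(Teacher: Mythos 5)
Your proof is correct: it is the standard symmetrization-plus-McDiarmid argument, and you handle the one genuinely delicate point (the absence of an absolute value in the paper's definition of $\Rad_S$, resolved by a union bound over $\fF$ and $-\fF$, which is where the $\log(2/\delta)$ comes from) properly. The paper itself gives no proof of \Cref{thm...BoundingGap} --- it imports the result verbatim from \cite{shalev2014understanding} --- and your argument is precisely the one in that reference, so there is nothing to compare beyond noting the match.
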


\begin{proof}[Proof of  \Cref{thm..Aposteriori}]
    Let $\fF_Q:=\{\ell(\cdot;\vtheta)\mid\norm{\vtheta}_\mathrm{P}\leq Q\}$. Note that $\ell(\cdot;\vtheta)$ is a $1$-Lipschitz function and bounded by $0$ and $\frac{1}{2}$, then directly from Lemma 26.9 of \cite{shalev2014understanding},
    
     {$$\Rad_S(\fF_Q)= \Rad_S\left(\ell \circ \fH_Q\right)
  \leq \Rad_S(\fH_Q)\leq 3Q\sqrt{\frac{2\log(2d)}{n}},$$ }
from  the above inequality, combined  with   \Cref{thm....RadmacherHQ} and \Cref{thm...BoundingGap} leads to  {the following inequalities with probability at least $1-\delta_Q$}
   {
  \begin{align*}
      \sup_{\norm{\vtheta}_\mathrm{P}\leq Q}\abs{\RD(\vtheta)-\RS(\vtheta)}
      &\leq 2\Exp_{S'}\Rad_{S'}(\fF_Q)+\frac{1}{2}\sqrt{\frac{\log(2/\delta_Q)}{2n}}\\
      &\leq 6Q\sqrt{\frac{2\log(2d)}{n}}+\frac{1}{2}\sqrt{\frac{\log(2/\delta_Q)}{2n}}.
  \end{align*}
  }
   {We use this bound with $Q=1,2,\ldots$ and $\delta_Q=\frac{6\delta}{\pi^2Q^2}$. Note that $1-\sum_{Q=1}^\infty\delta_Q=1-\delta$ and consider the union bound. Then with probability at least $1-\delta$, the following inequality holds for all $Q>0$,}
  \begin{equation*}
      \sup_{\norm{\vtheta}_\mathrm{P}\leq Q}\abs{\RD(\vtheta)-\RS(\vtheta)}
      \leq 6Q\sqrt{\frac{2\log(2d)}{n}}+\frac{1}{2}\sqrt{\frac{\log(\pi^2Q^2/3\delta)}{2n}}.
  \end{equation*}
   {To further use this inequality, we let $Q$ be the integer part of $\norm{\vtheta}_\mathrm{P}$.}
  Note that   $Q\leq \norm{\vtheta}_\mathrm{P}+1$. Therefore, we have
  \begin{align*}
      \abs{\RD(\vtheta)-\RS(\vtheta)}
      &\leq 6(\norm{\vtheta}_\mathrm{P}+1)\sqrt{\frac{2\log(2d)}{n}}+\frac{1}{2}\sqrt{\frac{1}{2n}\log\frac{\pi^2}{3\delta}+\frac{\log (\norm{\vtheta}_\mathrm{P}+1)^2}{2n}}\\
      &\leq 6(\norm{\vtheta}_\mathrm{P}+1)\sqrt{\frac{2\log(2d)}{n}}+\frac{1}{2}\frac{\norm{\vtheta}_\mathrm{P}+1}{\sqrt{2n}}+\frac{1}{2}\sqrt{\frac{\log(\pi^2/3\delta)}{2n}},
  \end{align*}
  where in the last inequality we used the fact that $\log(a)\leq a$, for $a\geq 1$, and  $\sqrt{a+b}\leq\sqrt{a}+\sqrt{b}$ for positive $a$ and $b$.
\end{proof}

\begin{proof}[Proof of Theorem \ref{thm..Apriori}]
    Firstly, the population risk can be decomposed into 
    \begin{equation*}
        \RD(\vtheta_{S,\lambda})
        =\RD(\tilde{\vtheta})+[\RD(\vtheta_{S,\lambda})-J_{S,\lambda}(\vtheta_{S,\lambda})]
        +[J_{S,\lambda}(\vtheta_{S,\lambda})-J_{S,\lambda}(\tilde{\vtheta})]
        +[J_{S,\lambda}(\tilde{\vtheta})-\RD(\tilde{\vtheta})].
    \end{equation*}
   Set $\lambda=3\lambda_0\sqrt{{2\log(2d)}}$, where $\lambda_0\geq 2+\frac{1}{12\sqrt{\log (2d)}}$, then by Theorem \ref{thm..Aposteriori}, we have with probability at least $1-\delta/2$,
   \begin{equation}\label{eq..ProofAprioriDenseNetPart1}
    \begin{aligned}
        \RD(\vtheta_{S,\lambda})-J_{S,\lambda}(\vtheta_{S,\lambda})
        &= \RD(\vtheta_{S,\lambda})-\RS(\vtheta_{S,\lambda})-3\lambda_0\sqrt{\frac{2\log(2d)}{n}}\norm{\vtheta_{S,\lambda}}_\mathrm{P} \\
        &\leq (\norm{\vtheta_{S,\lambda}}_\mathrm{P}+1)\frac{3(2-\lambda_0)\sqrt{2\log(2d)}+\frac{1}{2\sqrt{2}}}{\sqrt{n}}\\&~~+3\lambda_0\sqrt{\frac{2\log(2d)}{n}}+\frac{1}{2}\sqrt{\frac{\log(2\pi^2/3\delta)}{2n}} \\
        &\leq 3\lambda_0\sqrt{\frac{2\log(2d)}{n}}+\frac{1}{2}\sqrt{\frac{\log(2\pi^2/3\delta)}{2n}},
    \end{aligned}\end{equation}
    where in the last inequality we used the fact that $2+\frac{1}{12\sqrt{\log (2d)}}= 2+\frac{1}{2\sqrt{2}\cdot 3\cdot\sqrt{2\log(2d)}}$.
    
    Next, by Theorem \ref{thm..Aposteriori} again, we have with probability at least $1-\delta/2$,
    \begin{equation}\label{eq..ProofAprioriDenseNetPart2}
    \begin{aligned}
        J_{S,\lambda}(\tilde{\vtheta})-\RD(\tilde{\vtheta})
        &= \RS(\tilde{\vtheta})-\RD(\tilde{\vtheta})+3\lambda_0\sqrt{\frac{2\log(2d)}{n}}\norm{\tilde{\vtheta}}_\mathrm{P} \\
        &\leq (\norm{\tilde{\vtheta}}_\mathrm{P}+1)\frac{3(2+\lambda_0)\sqrt{2\log(2d)}
        +\frac{1}{2\sqrt{2}}}{\sqrt{n}}\\&~~-3\lambda_0\sqrt{\frac{2\log(2d)}{n}}+\frac{1}{2}\sqrt{\frac{\log(2\pi^2/3\delta)}{2n}}.
    \end{aligned}
    \end{equation}
    Finally, we observe  that $J_{S,\lambda}(\vtheta_{S,\lambda})-J_{S,\lambda}(\tilde{\vtheta})\leq 0$ by   optimality of   $\vtheta_{S,\lambda}$, then combined with  the fact $\norm{\tilde{\vtheta}}_\mathrm{P}\leq 6\norm{f^*}_{\fB}$ from Theorem \ref{thm..ApproxError},
    our proof   is finished by collecting altogether \eqref{eq..ProofAprioriDenseNetPart1}, \eqref{eq..ProofAprioriDenseNetPart2}    and  $\RD(\tilde{\vtheta})\leq \frac{3\norm{f^*}_{\fB}^2}{2N_\mathrm{non}}$ from Theorem \ref{thm..ApproxError}, i.e.,
    \begin{align*}
        & \RD(\vtheta_{S,\lambda})\\
        =&\RD(\tilde{\vtheta})+[\RD(\vtheta_{S,\lambda})-J_{S,\lambda}(\vtheta_{S,\lambda})]
        +[J_{S,\lambda}(\vtheta_{S,\lambda})-J_{S,\lambda}(\tilde{\vtheta})]
        +[J_{S,\lambda}(\tilde{\vtheta})-\RD(\tilde{\vtheta})]\\
 \leq &\frac{3\norm{f^*}_{\fB}^2}{2N_\mathrm{non}}+3\lambda_0\sqrt{\frac{2\log(2d)}{n}}+\frac{1}{2}\sqrt{\frac{\log(2\pi^2/3\delta)}{2n}}\\
 &+(\norm{\tilde{\vtheta}}_\mathrm{P}+1)\frac{3(2+\lambda_0)\sqrt{2\log(2d)}
        +\frac{1}{2\sqrt{2}}}{\sqrt{n}}-3\lambda_0\sqrt{\frac{2\log(2d)}{n}}+\frac{1}{2}\sqrt{\frac{\log(2\pi^2/3\delta)}{2n}}\\
        \leq& \frac{3\norm{f^*}_{\fB}^2}{2N_\mathrm{non}}+(6\norm{f^*}_{\fB}+1)\frac{3(2+\lambda_0)\sqrt{2\log(2d)}
        +\frac{1}{2\sqrt{2}}}{\sqrt{n}}+\sqrt{\frac{\log(2\pi^2/3\delta)}{2n}}\\
        \lesssim& \frac{\norm{f^*}_{\fB}^2}{N_\mathrm{non}}+  \frac{1}{\sqrt{n}}
        \left(\lambda(\norm{f^*}_{\fB}+1)+\sqrt{\log 1/\delta}\right),
    \end{align*}
    which finishes our proof.
\end{proof}

\subsection{Applications to DenseNet}\label{subsection....DenseNet}
We  directly apply our results, especially \Cref{thm..Apriori}, to obtain the a priori estiamte for DenseNet.
\begin{cor}[A priori estimate for DenseNet]\label{cor..Apriori...DenseNet}
  Suppose  $f^*\in\fB$, $\lambda= \Omega(\sqrt{\log d})$, and  assume that  $\vtheta_{S,\lambda}$ is an optimal solution for the regularaized model \eqref{Subsection...MainREsults...eq..RegularizedRisk}, i.e., $\vtheta_{S,\lambda}\in\arg\min_{\vtheta}J_{S,\lambda}(\vtheta)$,
  then for any $\delta\in(0,1)$, with probability at least $1-\delta$ over the choice of the training sample $S$,  we have for any DenseNet $f(\cdot;\vtheta)$, its population risk satisfies
  \begin{equation}\label{eq..AprioriEstimate..DenseNet} 
    \begin{aligned}
        \RD(\vtheta_{S,\lambda})
        &:=\Exp_{\vx\sim\fD}\tfrac{1}{2}(f(\vx;\vtheta_{S,\lambda})-f^*(\vx))^2\\
        &\lesssim \frac{\norm{f^*}_{\fB}^2}{L^2m}+  \frac{1}{\sqrt{n}}
        \left(\lambda(\norm{f^*}_{\fB}+1)+\sqrt{\log 1/\delta}\right).
    \end{aligned}
  \end{equation}
\end{cor}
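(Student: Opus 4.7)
The proof is a direct application of \Cref{thm..Apriori}, so the plan reduces to two concrete bookkeeping tasks: (i) verify that DenseNet, as displayed in Example 4, fits the block-structured form required by \Cref{assump..BlockMatrix}, and (ii) count the total number $N_{\mathrm{non}}$ of nonlinear connections to identify it as $\Theta(L^2 m)$. Once these are in place, the bound \eqref{eq..AprioriEstimate..DenseNet} falls out of \eqref{eq..AprioriEstimate} by substitution.

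For step (i), the plan is to read off the correspondence directly from the DenseNet adjacency matrix exhibited in \Cref{subsection....examples}. I set $d_{l-1} = k_0 + (l-1)k$, $d_l = k_0 + lk$, and $p_l = lm$; then the $l$-th block of the DenseNet adjacency matrix, composed of $\mW^{[l]}$, $\sigma \mI_{lm\times lm}$, $\bar{\mI}_{(k_0+lk)\times(k_0+(l-1)k)}$ and $\bar{\mU}^{[l]}$, matches precisely the block $\mB^{[l]}$ in \eqref{Definition...Eq...LongRepresentation} with $\mS^{[l]} = \bar{\mI}$ (the required row permutation matrix of $\begin{pmatrix} \mI_{d_{l-1}\times d_{l-1}} \\ \mzero_{k\times d_{l-1}} \end{pmatrix}$) and $\mU^{[l]} = \bar{\mU}^{[l]}$. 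The assumption $k_0 \geq d+1$ from Example 4 together with $m \geq d+1$ (a mild requirement on the growth-rate parameter) ensures $\min\{d_0, d_l, p_l\} \geq d+1$ for every $l$. This is the step that needs the most care, since one must trace through the notational mismatch between the DenseNet notation ($k_0$, $k$, $lm$) and the generic notation ($d_{l-1}$, $d_l$, $p_l$) of \Cref{assump..BlockMatrix}, and confirm that the ``identity'' inclusion $\bar{\mI}$ really is a row permutation matrix of the required form.

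For step (ii), I note that the nonlinear connections in DenseNet live exactly in the blocks $\sigma \mI_{lm\times lm}$ for $l=1,\ldots,L$, each contributing $lm$ nonlinearities. By the third identity in item 7 of \Cref{prop..} one therefore has
\begin{equation*}
    N_{\mathrm{non}} \;=\; \sum_{l=1}^{L} lm \;=\; \frac{L(L+1)}{2}\, m \;\gtrsim\; L^2 m.
\end{equation*}
Consequently $1/N_{\mathrm{non}} \lesssim 1/(L^2 m)$.

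Finally, since DenseNet satisfies \Cref{assump..BlockMatrix}, \Cref{thm..Apriori} applies verbatim: for any $\delta \in (0,1)$, with probability at least $1-\delta$ over $S$, the optimizer $\vtheta_{S,\lambda}$ of the regularized model \eqref{Subsection...MainREsults...eq..RegularizedRisk} satisfies
\begin{equation*}
    \RD(\vtheta_{S,\lambda}) \;\lesssim\; \frac{\norm{f^*}_{\fB}^2}{N_{\mathrm{non}}} + \frac{1}{\sqrt{n}}\left(\lambda(\norm{f^*}_{\fB}+1) + \sqrt{\log 1/\delta}\right).
\end{equation*}
Replacing $N_{\mathrm{non}}$ by the lower bound $\Theta(L^2 m)$ from step (ii) yields \eqref{eq..AprioriEstimate..DenseNet}, completing the proof. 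The only non-routine element is the block-structure verification in step (i); the counting and the substitution are immediate.
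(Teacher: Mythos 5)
Your proposal is correct and follows essentially the same route as the paper's own proof: identify $d_0=k_0$, $d_l=k_0+lk$, $p_l=lm$ so that the DenseNet adjacency matrix matches the block form of \Cref{assump..BlockMatrix}, compute $N_{\mathrm{non}}=\sum_{l=1}^L lm=\frac{L(L+1)}{2}m$, and substitute into \Cref{thm..Apriori}. If anything, you are slightly more careful than the paper in flagging the implicit requirement $lm\geq d+1$ (hence $m\geq d+1$) needed for the condition $\min\{d_0,d_l,p_l\}\geq d+1$.
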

\begin{proof}
    {  In the case of DenseNet, as is shown already in \Cref{ex...Densenet}, the matrix representation $\mA$ incorporates the form
\begin{equation*}
        \begin{pmatrix}
            \mzero & & & & & & &\\
            \mV & \mzero & & & & & &\\
            & \mB^{[1]} & \mzero & & & & &\\
            & & \mB^{[2]} & \mzero & & & &\\
            & & & \ddots & \ddots & & &\\
            & & & & \mB^{[l]} & \mzero & &\\
            & & & & & \ddots & \ddots &\\
            & & & & & & \mB^{[L]} &\mzero &\\
            & & & & & & & \vu^\T &\mzero
        \end{pmatrix},
    \end{equation*}}
    with    $\mV$ taking size  $d_0\times d,~d_0=k_0$, length of  vector $\vu$ being  $d_L,~d_L=k_0+Lk$, and for each matrix block $\mB^{[l]},~l=1,\cdots,L:$
    \begin{equation*}
        \mB^{[l]}=
        \begin{pmatrix}
            \mW^{[l]}_{} &  &\\
             & \sigma\mI_{p_l\times p_l} & \\
           \bar{\mI}_{(k_0+lk)\times (k_0+(l-1)k)}  &  & \bar{\mU}^{[l]}
        \end{pmatrix},
    \end{equation*}
    where  $\mB^{[l]}$ has size  $(2p_{l}+d_{l})\times(2p_{l}+d_{l-1}),~p_l=lm,~d_l=k_0+lk$,  and the size of its components $\left\{\mW^{[l]},\sigma\mI_{p_l\times p_l}, \bar{\mI}_{(k_0+lk)\times (k_0+(l-1)k)} ,  \bar{\mU}^{[l]}\right\}$ respectively  reads $p_l\times d_{l-1}$, $p_l\times p_l$, $d_{l}\times d_{l-1}$, and $d_{l}\times p_l$. 
 Specifically,  $\bar{\mI}_{(k_0+lk)\times (k_0+(l-1)k)} =\begin{pmatrix}
      \mI_{d_{l-1}\times d_{l-1}}\\\mzero_{(d_{l}-d_{l-1})\times d_{l-1}}
    \end{pmatrix}$. Therefore, DenseNet satisfies \Cref{assump..BlockMatrix}. Moreover, we have
\begin{equation}\label{Eq...NumberDense}
    N_{\mathrm{non}} =\sum_{ l=1}^L p_l=\sum_{l=1}^L lm=\frac{L(L+1)}{2}m,
\end{equation}
 apply \Cref{thm..Apriori} directly,  \eqref{eq..AprioriEstimate} reads exactly as \eqref{eq..AprioriEstimate..DenseNet}. 
\end{proof}
\section{Conclusion}\label{Section...Conclusion}
Our main contribution is the introduction of a novel representation  of feedforward neural networks, namely the nonlinear weighted DAG. 
This representation provides further understanding of efficiency of   shortcut connections utilized by various networks.  
We also show in detail how  typical examples of feedforward neural networks can be represented using this  formulation.

Moreover, we derive  a priori estimates in avoidance of the CoD for   neural networks satisfying the assumption of shortcut connections~
(\Cref{assump..BlockMatrix}). Our estimates serve as an extension for the results in~\cite{e2019prioriRes,ew2019prioriTwo}, and  key to our analysis is the  employment
of   weighted path norm. As  demonstrated in~\cite{e2019prioriRes,ew2019prioriTwo}, the weight path norm is capable of  bounding the approximation and estimation errors simultaneously for ResNet and two-layer network, and we show that it is also the case for DenseNet.

\section*{Acknowledgments}
We would like to give special thanks to
Prof. Weinan E for his helpful discussions.  {We would also like to thank anonymous referees for numerous comments that helped to improve previous drafts of this paper.} This work is also sponsored by the National Natural Science Foundation of China Grant No. 12101401 (T. L.) and Shanghai Municipal of Science and Technology Major Project NO.2021SHZDZX0102 (T.L.).
\bibliographystyle{abbrv}
\bibliography{DenseNet}
\appendix
\section{Properties of the Representation    and  Symbol}\label{appendix}
\begin{proof}[Proof of \Cref{prop..property}]
    1. As a lower triangular matrix, $\mA(\vtheta,\xi)$ is obviously nilpotent.
    
    2. By definition, we have $$\vz_{s+1}-\vz_s=\mA(\vtheta,\sigma)\vz_s-\mA(\vtheta,\sigma)\vz_{s-1}.$$ Since $\sigma(\cdot)$ is $1$-Lipschitz, then
    \begin{align*}
        \Abs{(\mA\vz_s-\mA\vz_{s-1})_i}
       &= \Abs{\sum_{j=1}^N\left[(\mA)_{ij}(\vz_s)_j-(\mA)_{ij}(\vz_{s-1})_j\right]}\\
        &\leq \sum_{j=1}^N\left(\mA(\abs{\vtheta},1)\right)_{ij}\abs{(\vz_s-\vz_{s-1})_j}.
    \end{align*}
    Thus for all $i=1,\cdots,N$, ~$(\abs{\vz_{s+1}-\vz_s})_i\leq (\mA(\abs{\vtheta},1)\abs{\vz_s-\vz_{s-1}})_i$. Therefore, inductively we have
    \begin{equation*}
        \norm{\vz_{s+1}-\vz_s}_\infty
        \leq \norm{\mA(\abs{\vtheta},1)\abs{\vz_s-\vz_{s-1}}}_\infty\leq \cdots
        \leq  \norm{\mA^s(\abs{\vtheta},1)\abs{\vz_1-\vz_0}}_\infty.
    \end{equation*}
    
    3. Since $\mA(\abs{\vtheta},1)$ is nilpotent, for sufficiently large $s$, we have $$\norm{\vz_{s+1}-\vz_s}_\infty\leq  \norm{\mzero_{N\times N}\abs{\vz_1-\vz_0}}_\infty=0.$$ Hence $\{\vz_s\}_{s\geq 0}$ is a Cauchy sequence, and its limit exists.
    
    4. This is straightforward by  definition of the sequence $\{\vz_s\}_{s\geq 0}$ as well as the existence of the limit $\vz_\infty$.
    
    5. Since $\vz_0=\mP_0\vz_s$ for any $s$, we have $\vz_s=\mP_0\vz_{s-1}+\mA\vz_{s-1}=\bar{\mA}\vz_{s-1}$. Hence, it holds naturally that $\vz_\infty=\bar{\mA}^\infty\vz_0$.
 
    6.  A simple calculation is sufficient to show that $\norm{\vtheta}_\mathrm{P}=\vone_\mathrm{out}^\T\sum_{s=0}^\infty\mA^s(\abs{\vtheta},3)\vone_\mathrm{in}$,
    and by taking advantage of  the  property of nilpotency, we have that $$\sum_{s=0}^\infty\mA^s(\abs{\vtheta},3)=(\mI_{N\times N}-\mA(\abs{\vtheta},3))^{-1},$$ then
    $\norm{\vtheta}_\mathrm{P}=\vone_\mathrm{out}^\T(\mI_{N\times N}-\mA(\abs{\vtheta},3))^{-1}\vone_\mathrm{in}$. Moreover, by  definition of $\mP_0$, we observe that
    $\norm{\vtheta}_\mathrm{P}=\vone_\mathrm{out}^\T\bar{\mA}^\infty(\abs{\vtheta},3)\vone_\mathrm{in}$.
    
        7. These are straightforward by definition of the entries in the  adjacency matrix representation.

\end{proof}
\end{document}